\documentclass{article}
\usepackage{a4wide}
\usepackage{amsthm}
\usepackage{setspace}
\usepackage{amssymb}
\usepackage{amsfonts}       
\usepackage{booktabs}       
\usepackage{nicefrac}       
\usepackage{microtype}      
\usepackage[dvipsnames]{xcolor}         
\usepackage[utf8]{inputenc} 
\usepackage[T1]{fontenc}    
\usepackage{hyperref}       
\usepackage{url}            

\usepackage{natbib}
\bibliographystyle{plainnat}

\usepackage[pdftex]{graphicx}
\usepackage{subcaption}
\usepackage{array,multirow}
\usepackage{enumitem}
\usepackage{bm, mathtools, amsmath}
\usepackage{amsthm}
\usepackage[capitalize,noabbrev]{cleveref}
\usepackage[dvipsnames]{xcolor}
\usepackage{selectp}

\theoremstyle{plain}
\newtheorem{theorem}{Theorem}[section]

\newtheorem{lemma}[theorem]{Lemma}

\theoremstyle{definition}

\theoremstyle{remark}

\usepackage{booktabs}
\usepackage{wrapfig}

\usepackage[ruled, lined, linesnumbered, commentsnumbered, longend]{algorithm2e}
\usepackage{algcompatible}

\RestyleAlgo{ruled}
\SetKwComment{Comment}{/*}{*/}

\usepackage[textsize=tiny]{todonotes}

\DeclareMathOperator{\clip}{clip}

\title{Mitigating Relative Over-Generalization in Multi-Agent Reinforcement Learning}

%


\author{Ting Zhu$^\mathbf{1}$, Yue Jin$^\mathbf{2}$, Jeremie Houssineau$^\mathbf{3}$, Giovanni Montana$^\mathbf{1,2,4}$\\ 
$^1$Department of Statistics, University of Warwick, Coventry, UK,\\ 
$^2$Warwick Manufacturing Group, University of Warwick, Coventry, UK,\\ 
$^3$School of Physical \& Mathematical Sciences, Nanyang Technological University, Singapore,\\ 
$^4$Alan Turing Institute, London, UK\\ 
\texttt{\{ting.zhu, yue.jin.3, g.montana\}@warwick.ac.uk, jeremie.houssineau@ntu.edu.sg}}


\begin{document}
\maketitle

\begin{abstract}
In decentralized multi-agent reinforcement learning, agents learning in isolation can lead to relative over-generalization (RO), where optimal joint actions are undervalued in favor of suboptimal ones. This hinders effective coordination in cooperative tasks, as agents tend to choose actions that are individually rational but collectively suboptimal. To address this issue, we introduce MaxMax Q-Learning (MMQ), which employs an iterative process of sampling and evaluating potential next states, selecting those with maximal Q-values for learning. This approach refines approximations of ideal state transitions, aligning more closely with the optimal joint policy of collaborating agents. We provide theoretical analysis supporting MMQ's potential and present empirical evaluations across various environments susceptible to RO. Our results demonstrate that MMQ frequently outperforms existing baselines, exhibiting enhanced convergence and sample efficiency.  
\end{abstract}

\section{Introduction}

Cooperative multi-agent reinforcement learning (MARL) has become increasingly important for addressing complex real-world challenges that require coordinated behaviors among multiple agents. Successful applications included playing card games \citep{Brown2018Superhuman}, autonomous driving \citep{shalev2016safe, pmlr-v155-zhou21a}, unmanned aerial vehicles \citep{wang2020multi}, wireless sensor networks \citep{xu2020learningnetworks, sahraoui2021schedule} and traffic light control \citep{bazzan2009opportunitiescontrol, zhou2023novel}. A dominant framework in MARL is centralized training with decentralized execution (CTDE) \citep{lowe2017multi,2018RashidQMIX,son2019qtran}, which often relies on a centralized coordinator to aggregate information, such as local observation or individual actions from other agents during training. While this approach has been widely adopted due to its effectiveness in leveraging global information, it may still face scalability challenges, particularly in environments with large number of agents or complex interactions. Additionally, in scenarios where privacy is a critical concern, CTDE methods that require access to individual agent data could be less desirable. While many CTDE methods do not require exhaustive local information from all agents, the potential for scalability and privacy issues in certain implementations warrants consideration. While inter-agent communication can partially mitigate some of these challenges \citep{foerster2016learning, zhu2022surveycommunication}, it introduces additional overhead, which can be prohibitive in environments where communication is costly or unreliable.



Consider large-scale drone swarms for search and rescue or surveillance tasks \citep{baldazo2019decentralized_drone, batra2022decentralized_swarms}. In a CTDE framework, centrally aggregating data from each drone can be impractical due to network delays, dynamic environments, and potential communication failures. Additionally, communication frameworks may struggle with channel congestion or high packet loss, especially in complex terrain or with electronic interference. Fully decentralized learning presents a promising alternative, where agents rely solely on their experiences without considering the actions or policies of other agents during both training and execution. This approach enables the system to scale effectively and remain resilient to communication issues. However, decentralized approaches come with their own challenges. 
From an individual agent's perspective, the learning process occurs within a non-stationary MDP, as the transition probabilities change due to the evolving policies of other agents.


Existing decentralized MARL methods, including optimism strategies in Q-learning \citep{lauer2000algorithm, matignon2007hysteretic, wei2016lenient}, attempt to mitigate these challenges. More recently, Ideal Independent Q-learning (I2Q) \citep{jiang2022i2q} explicitly models state transitions assuming optimal joint behavior, introducing ideal transition probabilities to address non-stationarity in independent Q-learning.

Another critical issue in decentralised MARL is \emph{Relative Over-generalisation} (RO), where agents prefer suboptimal policies because individual actions appear preferable without coordinated strategies. First defined and studied within the field of cooperative co-evolutionary algorithms \citep{wiegand2004analysis, panait2006biasing, panait2007analysis}, it has more recently been explored predominantly in centralised MARL contexts \citep{rashid2020weighted, gupta2021uneven, shi2022curriculum}. RO occurs when agents adapt their actions to limited interactions, often focusing on the exploratory behaviours of other agents. Consequently, agents may favour more robust but less optimal solutions in the absence of coordinated strategies. In decentralised learning settings, this problem has been discussed within simple matrix game scenarios \citep{wei2016lenient}. RO, exacerbated by non-stationarity, presents a significant challenge as agents make decisions based on fluctuating global rewards without the benefit of coordinated strategies \citep{matignon2012independent, wei2016lenient}. Although previous implementations of optimistic strategies have shown some efficacy, our empirical results indicate they fall short in cooperative tasks with pronounced RO challenges.


This paper introduces MaxMax Q-Learning (MMQ), a novel algorithm designed to address the RO problem in decentralised MARL settings. MMQ aims to mitigate the challenges posed by RO and non-stationarity in decentralised learning environments. The key insight behind MMQ is enabling agents to reason about beneficial experiences that occur infrequently. At its core, MMQ employs two non-parameterised quantile models to capture the range of state transitions, accounting for both environmental factors and the evolving policies of learning agents. These models iteratively sample, evaluate, and select optimal states, refining the approximation of ideal state transitions and facilitating global reward maximisation. The state with the highest Q-value is then selected to update the value function, promoting convergence towards optimal Q-values in the context of other agents' best actions. The MMQ algorithm incorporates two maximum operators in the Bellman update: the first takes the maximum over all possible next states to select the most promising future scenario, and the second takes the maximum over Q-values of state-action pairs to determine the best action in that scenario. MMQ's adaptive nature, which involves continuously updating the ranges of possible next states, enables effective decision-making in dynamic environments.

The main contributions of this paper are threefold. First, we introduce MMQ, a novel algorithm that employs quantile models to capture multi-agent dynamics and approximate ideal state transitions through sampling. Second, we provide a theoretical demonstration of MMQ's potential to converge to globally optimal joint policies, assuming perfect knowledge of forward and value functions. Third, we present empirical results showing that MMQ often outperforms or matches established baselines across various cooperative tasks, highlighting its potential for faster convergence, enhanced sample efficiency, and improved reward maximisation in decentralised learning environments.

The remainder of this paper is structured as follows: Section 2 discusses related work in MARL and uncertainty quantification. Section 3 provides background on multi-agent Markov Decision Processes and the challenges of relative over-generalization. Section 4 presents the methodology of MaxMax Q-Learning, including its theoretical foundations and implementation details. Section 5 describes our experimental setup and results across various environments. Finally, Section 6 concludes the paper with a discussion of our findings and potential directions for future research.

\section{Related work}

\textbf{Centralised learning methods.} Within the centralized training paradigm, RO has been discussed mostly for value factorization methods like QMIX \citep{2018RashidQMIX}. The monotonic factorization in QMIX cannot represent the dependency of one agent's value on others' policies, making it prone to RO. Proposed solutions include weighting schemes during learning \citep{rashid2020weighted}, curriculum transfer from simpler tasks  \citep{gupta2021uneven, shi2022curriculum}, and sequential execution policy \citep{liu2024solving}. Soft Q-learning extensions to multi-agent actor-critics \citep{wei2018multiagentSoft, lowe2017multi} utilise energy policies for global search to mitigate RO. However, unlike our decentralized approach, these methods require a centralized critic with joint action access during training.

\textbf{Fully decentralized learning.} Decentralized approaches in MARL aim to overcome the scalability and privacy issues associated with centralized methods. However, they face unique challenges, particularly in addressing non-stationarity and RO. Existing approaches can be categorized based on their strategies for tackling these issues.
Basic independent learning methods like Independent Q-learning (IQL) \citep{tan1993multi} and independent PPO \citep{yu2022surprising} form the foundation of decentralized MARL. However, their simultaneous updates can lead to non-stationarity, potentially compromising convergence. Recent work by \citet{su2023fully} on DPO addresses this by providing monotonic improvement and convergence guarantees. To mitigate negative impacts of uncoordinated learning, several methods promote optimism toward other agents' behaviors. Distributed Q-learning \citep{lauer2000algorithm} selectively updates Q-values based only on positive TD errors. Hysteretic Q-learning \citep{matignon2007hysteretic} uses asymmetric learning rates, while Lenient Q-learning \citep{wei2016lenient} selectively ignores negative TD errors. These techniques aim to overcome convergence to suboptimal joint actions by dampening unhelpful Q-value changes. Taking a different approach, the recently introduced Ideal Independent Q-learning (I2Q) \citep{jiang2022i2q} explicitly models ideal cooperative transitions. However, it requires learning an additional utility function over state pairs. Our proposed method, MMQ, builds upon these approaches by encoding uncertainty about decentralized MARL dynamics. We model other agents as sources of heteroscedastic uncertainty with an epistemic flavor, providing a more flexible way to represent optimistic policies. By sampling from possible next states, MMQ avoids the need for heuristic corrections or separate Q-functions, offering a novel solution to the challenges of decentralized MARL.

\textbf{Uncertainty quantification.} Quantifying different sources of uncertainty is crucial in reinforcement learning, particularly in multi-agent settings. Prior work distinguishes between aleatoric uncertainty, arising from environment stochasticity, and epistemic uncertainty, due to insufficient experiences \citep{osband2016deep, depeweg2016learning}. Various methods, including variance networks \citep{kendall2017uncertainties, wu2021uncertainty} and ensembles \citep{lakshminarayanan2017simple}, have been proposed to model these uncertainties, with applications in single-agent RL \citep{chua2018deepmodels, sekar2020planningmodels}.
MARL introduces additional complexity due to the dynamic nature of agent interactions, leading to non-stationarity. This non-stationarity limits an agent's ability to reduce epistemic uncertainty through repeated state visits \citep{hernandez2017survey} and can be viewed as another form of epistemic uncertainty. Our proposed MMQ algorithm addresses these challenges by using quantile networks to effectively manage two key sources of epistemic uncertainty in multi-agent settings: limited experiential data and evolving strategies of other agents. This approach allows MMQ to better handle the unique uncertainties present in decentralized MARL environments.

\section{Background and preliminaries}

\subsection{Multi-agent Markov Decision Process}

Consider a multi-agent Markov Decision Process (MDP) represented by $\mathcal{M} = (\mathcal{S}, \mathcal{A}, R, P_{\mathrm{env}},\gamma)$. Within this tuple, $\mathcal{S}$ denotes the state space, $\mathcal{A}$ is the joint action space, $P_{\mathrm{env}}(s'|s,\bm{a})$ and $R(s,s')$ are respectively the environment dynamics and reward function for states $s,s' \in \mathcal{S}$ and action $\bm{a} \in \mathcal{A}$, and $\gamma$ is the discount factor. Given $N$ agents, the action space is of the form $\mathcal{A} = \mathcal{A}_1 \times \dots \times \mathcal{A}_N$ with any action $\bm{a} \in \mathcal{A}$ taking the form $\bm{a} = (a_1,\dots,a_N)$. At each time step $t$, an agent indexed by $i \in \{1,\dots,N\}$ selects an individual action, $a_i$. When the $N$ actions are executed, the environment transitions from state $s$ to state $s'$, and every agent receives a global reward, $r_t$. The objective is to maximize the expected return, i.e., $\mathbb{E}[\sum_{t=0}^T \gamma^t r_t]$, where $T$ is the time horizon. The individual environment dynamics is defined as 
$$
P_{i}(s'|s, a_i) = \sum_{\bm{a}_{-i}} P_{\mathrm{env}}(s'|s,\bm{a}) \pi_{-i}(\bm{a}_{-i}|s)
$$
where $\bm{a}_{-i}$ represents the joint action excluding agent $i$ and $\pi_{-i}$ is the joint policy of all other agents. Here, the joint action $\bm{a}$ inherently depends on $a_i$ and $\bm{a}_{-i}$. From any individual agent's perspective, the learning process occurs within a non-stationary MDP due to the evolving policy $\pi_{-i}$.

\subsection{Relative over-generalization through an example}

\begin{table}[t]
  \caption{Payoff matrix for a two-agent game} 
 \centering
 \begin{tabular}{cc|ccc|}
 & & \multicolumn{3}{c|}{Agent 2} \\
 & & A & B & C \\
 \hline
 \parbox[t]{2mm}{\multirow{3}{*}{\rotatebox[origin=c]{90}{Agent 1}}}
 & A & +3 & -6 & -6\\
 & B & -6 & 0 & 0\\
 & C & -6 & 0 & 0 \\
 \hline
 \end{tabular}
 \label{fig:matrix}
\end{table}

Consider a two-agent game with the reward structure shown in Table~\ref{fig:matrix}. In this game, there are three possible actions: $A, B$, and $C$. Agents would receive a joint reward of $+3$ if they take $A$ together. However, if only one agent takes $A$, that agent incurs a penalty of $-6$. Agents end up choosing less optimal actions ($B$ or $C$) if they perceive the reward for choosing $A$ to be lower, based on their expectations of the other agent's actions. For agent $1$, the utility function $Q_1(\cdot)$ is related to the probability that the other agent chooses $A$, i.e. $\pi_2(A)$. In this case, $Q_1(A)$ would be smaller than $Q_1(B)$ or $Q_1(C)$ if $\pi_2(A)<\frac{2}{5}$. This threshold arises because the expected value of choosing $A$ becomes lower than choosing $B$ or $C$ when the probability of the other agent also choosing $A$ falls below $\frac{2}{5}$. Consequently, during initial uniform exploration where $\pi_1(A)=\pi_2(A)=\frac{1}{3}$, both agents tend to favour $B$ or $C$ over $A$, even though $A$ is the globally optimal choice. Thus, with independent learning without considering the other agent's best action, both agents may end up with choosing suboptimal actions and fail to cooperate.

 \subsection{Independent Q-Learning}

In independent Q-learning \citep{tan1993multi}, each agent $i$ learns a policy independently, treating other agents as part of the environment. The individual Q-function is $Q_i(s, a_i) = \mathbb{E}\left[\sum_{t=0}^\infty \gamma^t r_t \mid s_0=s, a_{i,0} = a_i\right]$ for agent $i$. Each agent updates its Q-function by minimizing the loss $\mathbb{E}_{P_{i}(s'|s, a_i)} \left[(y_i - Q_i(s, a_i))^2\right]$, where $y_i$ is the target value defined as $R(s, s') + \gamma \max_{a_i'}Q_i(s', a_i')$. The RO problem arises in this setup as each agent seeks to maximize its own expected return based on experiences where other agents' policies evolve and contain random explorations. 

\subsection{Ideal transition probabilities}
 
To address the RO problem in this context, some approaches introduce implicit coordination mechanisms centered on the concept of an \emph{ideal transition model} \citep{lauer2000algorithm, matignon2007hysteretic, wei2016lenient, palmer2018negative, jiang2022i2q}. 
These methods guide each agent's learning with hypothetical transitions that assume optimal joint behavior, aligning independent learners towards coordination. Let $\boldsymbol{\pi}_{-i}$ denote the joint policy of other agents, and $Q^*$ the optimal joint Q-function. The optimal joint policy of other agents can be expressed as $\pi_{-i}^*(s, a_i) = \arg\max_{\bm{a}_{-i}} Q^*(s, a_i, \bm{a}_{-i})$.

The concept of ideal transitions refers to hypothetical state transitions that assume other agents are following optimal joint policies. These ideal transition probabilities represent the dynamics that would occur if all agents achieved perfect coordination, and are defined as $P_{i}^*(s'|s, a_i) = P_{\mathrm{env}}(s'|s, a_i, \pi_{-i}^*(s, a_i))$. Based on these probabilities, the Bellman optimality equation is given by
\begin{equation} \label{eq:bellman_original}
Q_i^*(s, a_i) = \mathbb{E}_{P_{i}^*(s'|s, a_i)} \left[R(s, s') + \gamma \max_{a_i'}Q_i^*(s', a_i')\right],
\end{equation}
where $Q_i^*(s, a_i)$ is the optimal Q-function for agent $i$.

An important theoretical result from \citet{jiang2022i2q} establishes that when all agents perform Q-learning based on these ideal transition probabilities, the individual and joint optimality align, that is, $\max_{a_i}Q_i^*(s, a_i) = Q^*(s, \pi^*(s))$, where $Q^*(s, \bm{a})$ is the optimal joint Q-function for any action $\bm{a} \in \mathcal{A}$, and $\pi^*$ is the optimal joint policy. However, achieving true ideal transitions is intractable in practice due to the evolving, uncontrolled nature of learning agents. This motivates developing techniques to approximate ideal transitions. 

\section{MaxMax Q-learning Methodology}

\subsection{Approximation of Bellman optimality equation}
\label{sec:mmq}

Our methodology aims to approximate ideal transition probabilities, which assume other agents follow optimal joint policies. We focus on deterministic environments and reformulate the Bellman optimality in Eq. \eqref{eq:bellman_original} to highlight the dependence on the set $\mathcal{S}_{s,a_i}$ of possible next states,
\begin{equation}
\label{eq:setNextStates}
\mathcal{S}_{s, a_i} = \left\{s' = f_{\mathrm{env}}(s, a_i, \bm{a}_{-i}) \mid \bm{a}_{-i} \in \prod_{j \neq i} \mathcal{A}_j \right\},
\end{equation}
where $f_{\mathrm{env}}(s, a_i, \bm{a}_{-i})$ is the deterministic transition function that maps the current state $s$ and the joint actions of all agents $(a_i, \bm{a}_{-i})$ to the next state $s'$, $\mathcal{A}_j$ is the action space for each agent $j$ and the Cartesian product $\prod_{j \neq i} \mathcal{A}_j$ represents all possible combinations of actions by the other agents. It is noted that we only use $f_{\mathrm{env}}$ to define the set $\mathcal{S}_{s,a_i}$ here, but do not need to learn this global transition function directly in our algorithm.  Encoding the deterministic transitions by a delta function, $\delta_{f_{\mathrm{env}}(s,\bm{a})} (s')$, Eq. \eqref{eq:bellman_original} is rewritten as
\begin{subequations}\label{bellman_new}
\begin{align}
Q_i^*(s, a_i) & = \mathbb{E}_{s' \sim \delta_{f_{\mathrm{env}}(s, a_i, \pi_{-i}^*(s, a_i))}(s')} \left[R(s, s') + \gamma \max_{a_i'} Q_i^*(s', a_i')\right] \\
\label{bellman_new_2}
& = \max_{s' \in \mathcal{S}^*_{s, a_i}} \left(R(s, s') + \gamma \max_{a_i'} Q_i^*(s', a_i')\right).
\end{align}
\end{subequations}
where $\mathcal{S}^*_{s, a_i}$ is any subset of $\mathcal{S}_{s, a_i}$ including $s^{\prime*} = f_{\mathrm{env}}(s, a_i, \pi_{-i}^*(s, a_i))$. 


By reformulating the Q-value optimization over the set $\mathcal{S}^*_{s,a_i}$, our approach allows for targeting the optimal value $Q^*(s,a_i)$ under the true coordinated joint behavior, without directly approximating $s^{\prime*}$. When there is no information about $s^{\prime*}$, the set $\mathcal{S}^*_{s, a_i}$ could be set in principle to $\mathcal{S}_{s, a_i}$, if it were known, to ensure the inclusion of $s^{\prime*}$. However, this will also make the maximisation over $\mathcal{S}^*_{s, a_i}$ in Eq. \eqref{bellman_new_2} more computationally challenging, implying a trade off between reducing the size of $\mathcal{S}^*_{s, a_i}$ and ensuring the inclusion of $s^{\prime*}$. We propose a learning procedure that enables each agent to progressively shrink their set of next states $\mathcal{S}^*_{s,a_i}$, as all agents explore and accumulate experiences.

In practice, at the algorithmic step $t$, we work with a subset $\hat{\mathcal{S}}_{s, a_i, t}$ which approximates one of the possible subsets $\mathcal{S}^*_{s, a_i}$. Since neither $\mathcal{S}_{s, a_i}$ nor $s^{\prime*}$ are known in practice due to the incomplete information about other agents' policies and the environment dynamics, we cannot guarantee that $s^{\prime*} \in \hat{\mathcal{S}}_{s, a_i, t} \subseteq \mathcal{S}_{s, a_i}$ holds, but we will show in our performance assessment that $s^{\prime*} \in \hat{\mathcal{S}}_{s, a_i, t}$ holds with high probability. Furthermore, direct maximization over $\hat{\mathcal{S}}_{s, a_i, t}$ is challenging as this set is infinite in general. 

To address this, we resort to Monte Carlo optimization, as in e.g.\ \citet{robert1999monte}, by introducing a finite set $\hat{\mathcal{S}}^M_{s, a_i, t}$ of $M$ points randomly sampled from $\hat{\mathcal{S}}_{s, a_i, t}$. Assuming no approximation error in the predicted bound, that is $\hat{\mathcal{S}}_{s, a_i, t} \subseteq \mathcal{S}_{s, a_i}$ holds, the sample set $\hat{\mathcal{S}}^M_{s, a_i, t}$ contains only reachable states and it follows that
\begin{subequations} \label{eq:max-based_Qvalue}
\begin{align}
Q_i^*(s, a_i) & \geq \max_{s' \in \hat{\mathcal{S}}^M_{s, a_i, t}} \left[ R(s, s') + \gamma \max_{a_i'} Q_i^*(s', a_i')\right] \\
\label{eq:max-based_Qvalue_2}
& = \!\max_{m \in \{1, \dots, M\}}\! \left[R(s, s'_m) + \gamma \max_{a'_i} Q_i^*(s'_m, a'_i)\right].
\end{align}
\end{subequations}
With the considered approach, there are two natural phases when running the associated algorithms:
\begin{enumerate}[nosep]
\item With little information to rely on, the agents explore the state space at random and collect diverse trajectories, which improve their understanding of the range of possible next state $\mathcal{S}_{s, a_i}$. In this phase, the estimated sets $\hat{\mathcal{S}}_{s, a_i, t}$ will be close to $\mathcal{S}_{s, a_i}$.
\item As agents refine their estimates of the set $\mathcal{S}_{s, a_i}$ of possible next states and accumulate sufficient reward information, the maximisation in Eq. \eqref{eq:max-based_Qvalue_2} yields increasingly stable values, facilitating policy convergence. This, in turns, means that the new trajectories will be more similar and optimised, progressively outnumbering the initial diverse trajectories. This will cause the sets $\hat{\mathcal{S}}_{s, a_i, t}$ to zero in on $s^{\prime*}$, hence facilitating the Monte Carlo optimisation in Eq. \eqref{eq:max-based_Qvalue_2}.
\end{enumerate}

\begin{figure}[t]
  \centering
  \includegraphics[width=0.45\linewidth]{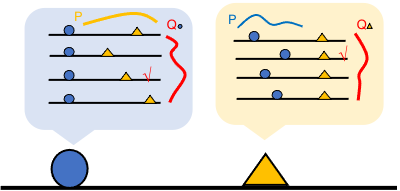}
  \caption{Illustration of the MMQ update for two agents. 
Different positions of two agents in the rounded rectangles represent different possible next states $s'=(x_{b},x_{y})$. From the perspective of the blue agent: The yellow curve (\textcolor{YellowOrange}{\textbf{P}}) represents the distribution of states with different yellow agent positions ($x_y$) in the replay buffer. The red curve (\textcolor{red}{\textbf{Q}}) represents the estimated Q-values for those possible next states. In the MMQ update, the blue agent selects samples for update based on the highest Q-value, marked by \textcolor{red}{$\surd$}. Importantly, this selection may not always coincide with the most frequently encountered scenarios (corresponding to the peak of the yellow curve) from past experiences, which may be sub-optimal.
}

  \label{fig:ill}
\end{figure}


An illustration of our sampling and selection process is shown in Figure~\ref{fig:ill}:  given a set of possible next state samples, our algorithm selects the state with the highest estimated Q-value for updating, which implicitly indicating the optimal action of other agents. As agents explore more possible actions, the estimated set $\hat{\mathcal{S}}_{s, a_i, t}$ increasingly approximates the true set $\mathcal{S}_{s, a_i}$. Crucially, if the optimal next state is contained within the estimated set, the equality 
$$ Q_i^*(s, a_i) = \max_{s' \in \hat{\mathcal{S}}_{s, a_i, t}} \left[R(s, s') + \gamma \max_{a_i'} Q_i^*(s', a_i')\right] $$
holds. This property is fundamental to our method, as it implies that through iterative learning and effective sampling, each agent can learn Q-values that closely align with those derived from ideal transition probabilities. In the following section, we analyse the convergence properties of this approach under ideal conditions. The complete algorithm, including implementation details, will be presented in Section~\ref{algorithm}.

\subsection{Convergence analysis} \label{sec:convergece}

Our combined learning and sampling approach facilitates the gradual convergence of the agents' policies toward the globally-optimal joint policy. This gradual convergence is supported by the insights from the following theorem, which shows the disparity between the optimal Q-values and those learned by the agents is limited by the difference between the best next state in the estimated set and the true best next state. This bounding relationship is crucial, as it indicates that the closer our estimated set of next states is to the actual set composed of all the possible next states, the more accurate the estimations of the agents' optimal Q-functions become.

In this section, we further elaborate on this mechanism and provide a formal convergence analysis. This analysis demonstrates how our proposed model-based Q-learning approach, combined with the sampling strategy, effectively facilitates convergence to an optimal global policy. We begin by showing that the difference between the Q-values learned by our approach and the optimal Q-values depends on how well we can estimate the best next state.

\begin{theorem}\label{conv}
Let $\mathcal{S}_{s, a_i}$ be the set of all possible next states as defined in \eqref{eq:setNextStates} and let \(\hat{S}\) be a non-empty subset of $\mathcal{S}_{s, a_i}$. Let \(s'^{*}\) and \(\hat{s}'^*\) represent the best next states in the optimal and approximate regimes, respectively, that is
\begin{align*}
    s'^{*} & = \arg\max_{s' \in \mathcal{S}_{s, a_i}} R(s,s') + \gamma \max_{a'_i} Q_i^*(s',a'_i) \\
    \hat{s}'^{*} & = \arg\max_{s' \in \hat{S}} R(s,s') + \gamma \max_{a'_i} Q_i(s',a'_i).
\end{align*}
Under Assumptions~\ref{as:Rlip}-\ref{as:order2} (see Appendix~\ref{sec:proofs}), if the Euclidean distance \(d(s'^{*}, \hat{s}'^*)\) is at most \(\epsilon\) for all \((s, a_i)\), then there exists $K > 0$ such that $|Q_i^*(s, a_i) - Q_i(s, a_i)| \leq (1-\gamma)^{-1}K\epsilon$ for all $(s, a_i)$.
\end{theorem}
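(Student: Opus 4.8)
The plan is to bound the difference $|Q_i^*(s,a_i) - Q_i(s,a_i)|$ by setting up a recursive (contraction-style) inequality in which the per-step error is controlled by $\epsilon$ via the Lipschitz assumptions. First I would write both quantities in their ``max over next states'' form: $Q_i^*(s,a_i) = R(s,s'^*) + \gamma \max_{a_i'} Q_i^*(s'^*, a_i')$ and $Q_i(s,a_i) = R(s,\hat s'^*) + \gamma \max_{a_i'} Q_i(\hat s'^*, a_i')$, using Theorem statement's definitions of $s'^*$ and $\hat s'^*$ and the reformulation in \eqref{bellman_new_2}. Subtracting, I would split the difference into three pieces: a reward term $R(s,s'^*) - R(s,\hat s'^*)$, a ``value at different states'' term $\gamma[\max_{a_i'}Q_i^*(s'^*,a_i') - \max_{a_i'}Q_i^*(\hat s'^*, a_i')]$, and a ``same state, different Q'' term $\gamma[\max_{a_i'}Q_i^*(\hat s'^*, a_i') - \max_{a_i'}Q_i(\hat s'^*, a_i')]$.

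\textbf{Key steps in order.} (1) Bound the reward term by $L_R\, d(s'^*, \hat s'^*) \le L_R \epsilon$ using Lipschitz continuity of $R$ (Assumption~\ref{as:Rlip}). (2) Bound the second term by $\gamma L_{Q^*} \epsilon$, using Lipschitz continuity of $s' \mapsto \max_{a_i'} Q_i^*(s',a_i')$ together with $|\max_x g(x) - \max_x h(x)| \le \max_x |g(x) - h(x)|$; the needed regularity of $Q_i^*$ should follow from the assumptions referenced as \ref{as:order2} (presumably smoothness/Lipschitzness of the forward and value functions). (3) Bound the third term by $\gamma \sup_{(s,a_i)} |Q_i^*(s,a_i) - Q_i(s,a_i)|$, again via the max-difference inequality. (4) Let $\Delta = \sup_{(s,a_i)} |Q_i^*(s,a_i) - Q_i(s,a_i)|$; combining the three bounds and taking the supremum over $(s,a_i)$ gives $\Delta \le K\epsilon + \gamma \Delta$ with $K = L_R + \gamma L_{Q^*}$ (or a similar constant), whence $\Delta \le (1-\gamma)^{-1} K \epsilon$, which is exactly the claim. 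One should also check the two ``arg max'' expressions are well-defined (or replace them with suprema and an $\eta$-slack argument), and verify $\Delta < \infty$ so the rearrangement is legitimate — this follows if rewards are bounded and $\gamma < 1$.

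\textbf{Main obstacle.} The delicate point is step (2): I need a Lipschitz constant for $s' \mapsto \max_{a_i'} Q_i^*(s', a_i')$ that is uniform in $s'$ and does not itself depend on $Q_i$. This requires translating the structural assumptions (Lipschitz continuity of $f_{\mathrm{env}}$, of $R$, and presumably a contraction/fixed-point argument for $Q_i^*$, plus whatever second-order or ordering condition Assumption~\ref{as:order2} encodes) into a genuine modulus of continuity for the optimal value function. A secondary subtlety is that $\hat s'^*$ is defined as an arg max of $R + \gamma \max_{a_i'} Q_i(\cdot,a_i')$ using the \emph{learned} $Q_i$, not $Q_i^*$, so when I pass from $Q_i$ back to $Q_i^*$ in step (3) I must be careful that the decomposition is chosen so that each cross term is either a pure reward/value-regularity term or a pure $\|Q_i^* - Q_i\|_\infty$ term — the split above is arranged precisely to make that work, but it is worth stating explicitly that no term mixes ``different state'' and ``different function'' simultaneously. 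Everything else is routine: the triangle inequality, the elementary max-difference lemma, and solving the scalar recursion $\Delta \le K\epsilon + \gamma\Delta$.
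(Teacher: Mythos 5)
Your proposal is correct and follows the same skeleton as the paper's proof: the same three-way decomposition (reward perturbation, value of $Q_i^*$ at the perturbed state, and $\|Q_i^*-Q_i\|_\infty$ at a common state), followed by the same scalar recursion in $\gamma$. The paper packages the first two pieces into a lemma bounding $|q^*(s,s'^*)-q^*(s,\hat s'^*)|\le K\epsilon$ with $q^*(s,s')=R(s,s')+\gamma\max_{a_i'}Q_i^*(s',a_i')$, and then unrolls the remaining term over time steps into a geometric series $(1+\gamma+\cdots)K\epsilon$; your version instead solves $\Delta\le K\epsilon+\gamma\Delta$ in one step, which is equivalent provided $\Delta<\infty$ --- a point you correctly flag (the paper's unrolling needs the analogous condition $\gamma^{T}\Delta_{t+T}\to 0$). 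The one genuine divergence is your step (2): the paper controls $|\max_{a_i'}Q_i^*(s'^*,a_i')-\max_{a_i'}Q_i^*(\hat s'^*,a_i')|$ by tracking how the maximizing \emph{action} moves under a state perturbation, invoking a parametric-optimization sensitivity theorem that is exactly why the second-order maximizer condition (Assumption~\ref{as:order2}) appears, and obtaining $K=K_R+\sqrt{1+L^2}\,K_Q$. You instead apply the elementary inequality $|\sup_x g(x)-\sup_x h(x)|\le\sup_x|g(x)-h(x)|$ together with Lipschitz continuity of $Q_i^*$ in its state argument, which yields the bound directly with $K=L_R+\gamma L_{Q^*}$ and renders Assumption~\ref{as:order2} (and the sensitivity theorem) unnecessary for this step. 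Your route is therefore slightly more elementary and needs weaker hypotheses; what you identified as the ``main obstacle'' --- a uniform modulus of continuity for $s'\mapsto\max_{a_i'}Q_i^*(s',a_i')$ --- is in fact already supplied by the Lipschitz part of Assumption A.2, so no additional work is required there.
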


The proof can be found in Appendix~\ref{sec:proofs}. 

In the context of our algorithm, we can relate this theorem to our specific implementation. Omitting the algorithm step $t$ from the notations for simplicity, $\hat{S}$ in our case corresponds to $\hat{\mathcal{S}}^M_{s,a_i}$. This set consists of $M$ states uniformly sampled from $\hat{\mathcal{S}}{s, a_i}$, which is itself a non-empty subset of $\mathcal{S}{s, a_i}$. $\hat{\mathcal{S}}_{s, a_i}$ is formed by all possible outcomes predicted by our learned model. Figure~\ref{fig:state_set} illustrates the relationships between these sets. 

This result demonstrates that if the distance between the estimated best next state \(\hat{s}'^*\) and the actual best next state \(s'^{*}\) is arbitrarily small for all state-action pairs \((s, a_i)\), then the discrepancy between the learned Q-values \(Q_i(s, a_i)\) and the optimal Q-values \(Q_i^*(s, a_i)\) is bounded. This implies that as we refine our estimation of the optimal next state through iterative sampling and learning, we progressively narrow the gap between the learned Q-values of our agents and the true optimal values. 

\begin{figure}[ht]
  \centering
  \includegraphics[width=0.3\linewidth]{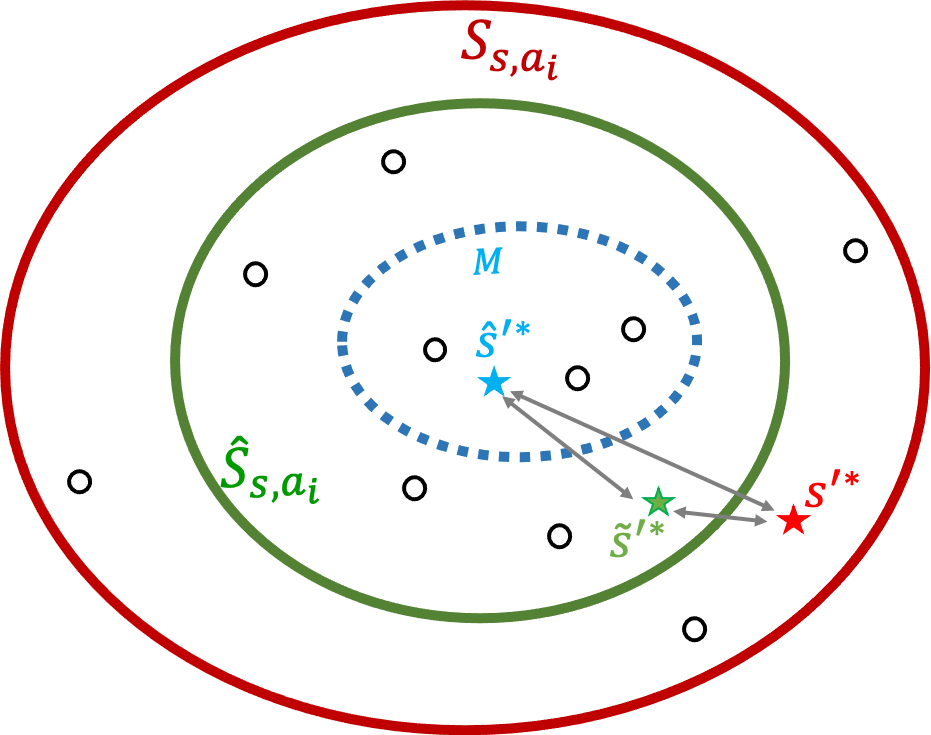}
  \caption{Illustration of the set relationship among $\mathcal{S}_{s,a_i}$, $\hat{\mathcal{S}}_{s,a_i}$ and $\hat{\mathcal{S}}^M_{s,a_i}$ (denoted as $M$ above). The red star, $s'^*$, is the best next state in the real set, and $\tilde{s}'^*$, and $\hat{s}'^*$ represent the two states that are the closest to the best next states in $\hat{\mathcal{S}}_{s,a_i}$ and $\hat{\mathcal{S}}^M_{s,a_i}$. According to the triangle inequality, the distance between \textcolor{red}{$s'^*$} and \textcolor{Cyan}{$\hat{s}'^*$}, $d$(\textcolor{red}{$s'^*$},\textcolor{Cyan}{$\hat{s}'^*$}), is upper bound by the sum of $d$(\textcolor{red}{$s'^*$},\textcolor{Green}{$\tilde{s}'^*$}) and $d$(\textcolor{Green}{$\tilde{s}'^*$},\textcolor{Cyan}{$\hat{s}'^*$})}
  \label{fig:state_set}
\end{figure}

To better analyse the distance $d(s', \hat{s}')$, we introduce a third state $\tilde{s}'$ as the best next state in $\hat{\mathcal{S}}_{s, a_i}$. This allows us to use the triangle inequality to upper bound the distance as ,
$d(s', \hat{s}') \leq d(s', \tilde{s}') + d(\tilde{s}', \hat{s}')$

The first term, $d(s', \tilde{s}')$, reflects the difference between the optimal next state in $\mathcal{S}_{s, a_i}$ and the one in $\hat{\mathcal{S}}_{s, a_i}$. As the size of $\hat{\mathcal{S}}_{s, a_i}$ increases in the first phase of the algorithm, it's more likely to include states closer to $s'$, thus decreasing $d(s', \tilde{s}')$. The second term, $d(\tilde{s}', \hat{s}')$, represents the error in the Monte Carlo optimization. This error tends to be large initially but decreases in the second phase as $\hat{\mathcal{S}}_{s, a_i}$ shrinks, making it easier to sample states close to $\tilde{s}'$. This analysis shows how our algorithm progressively improves its estimation of the optimal next state, contributing to the overall convergence of the Q-values.

\begin{theorem}\label{distance}
Assume that $\mathcal{S} = \mathbb{R}$ and that $\hat{\mathcal{S}}_{s,a_i}$ is of the form $[-u, u]$ for some $u \in (0, \infty)$. Consider \(M\) i.i.d.\ samples \(s'_1, \dots, s'_M\) from  the uniform distribution on $[-u, u]$. It holds that
\[
\mathbb{E}\bigg[ \min_{k = 1, \dots, M} | \tilde{s}^{\prime *} - s'_k | \bigg] < \frac{2u}{M + 1}.
\]
\end{theorem}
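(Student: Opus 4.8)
\textbf{Proof proposal for Theorem~\ref{distance}.}

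The plan is to compute the expectation of the minimum distance directly from the cumulative distribution function of the relevant random variable. Fix the target point $\tilde{s}^{\prime *} \in [-u, u]$, and for each sample $s'_k$ let $D_k = |\tilde{s}^{\prime *} - s'_k|$. Since the $s'_k$ are i.i.d.\ uniform on $[-u, u]$, each $D_k$ has a known distribution supported on $[0, 2u]$: for $r \ge 0$, $\mathbb{P}(D_k \le r)$ is the Lebesgue measure of $[\tilde{s}^{\prime *} - r, \tilde{s}^{\prime *} + r] \cap [-u, u]$ divided by $2u$. The key quantity is $\mathbb{P}(D_k > r) = 1 - \mathbb{P}(D_k \le r)$, and by independence the minimum $D_{(1)} = \min_k D_k$ satisfies $\mathbb{P}(D_{(1)} > r) = \prod_{k} \mathbb{P}(D_k > r) = \big(\mathbb{P}(D_1 > r)\big)^M$. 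Then I would use the tail-integral identity $\mathbb{E}[D_{(1)}] = \int_0^{2u} \mathbb{P}(D_{(1)} > r)\, dr = \int_0^{2u} \big(\mathbb{P}(D_1 > r)\big)^M dr$.

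The next step is to bound $\mathbb{P}(D_1 > r)$ from above by a clean expression independent of the location of $\tilde{s}^{\prime *}$. The worst case for making the minimum large is when $\tilde{s}^{\prime *}$ sits at an endpoint, say $\tilde{s}^{\prime *} = u$; then the interval around it that lies inside $[-u,u]$ has length only $r$ (not $2r$) for $r \le 2u$, giving $\mathbb{P}(D_1 \le r) \ge \tfrac{r}{2u}$, hence $\mathbb{P}(D_1 > r) \le 1 - \tfrac{r}{2u}$ for all $r \in [0, 2u]$ and all admissible $\tilde{s}^{\prime *}$. (For interior points the bound is even better, $1 - \tfrac{2r}{2u}$ where valid, but the endpoint bound suffices and is uniform.) Substituting this in yields
\[
\mathbb{E}[D_{(1)}] \;\le\; \int_0^{2u} \Big(1 - \frac{r}{2u}\Big)^M dr \;=\; 2u \int_0^1 (1-x)^M\, dx \;=\; \frac{2u}{M+1},
\]
after the substitution $x = r/(2u)$. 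To get the strict inequality claimed in the statement, I would note that the bound $\mathbb{P}(D_1 > r) \le 1 - \tfrac{r}{2u}$ is strict on a set of positive measure (for any fixed $\tilde{s}^{\prime *}$, it is strict whenever the ball around it is not truncated, i.e.\ for small $r$), so the integral inequality is strict; alternatively one observes that $\tilde{s}^{\prime *}$ is itself the best point in $\hat{\mathcal{S}}_{s,a_i}$ from the earlier construction, so it lies strictly inside for the relevant configurations, making the gain strict.

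I do not anticipate a serious obstacle here; the argument is essentially the standard computation of the expected spacing of order statistics, and the only mild subtlety is handling the truncation of the symmetric ball near the endpoints of $[-u,u]$ — which I resolve by simply using the uniform endpoint bound $1 - r/(2u)$ rather than chasing the exact, location-dependent $\mathbb{P}(D_1 \le r)$. The strictness of the final inequality is the other point to state carefully rather than leave implicit; it follows because the tail bound is not tight across the whole interval $[0,2u]$ for any fixed target point.
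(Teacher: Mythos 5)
Your proof is correct and follows the same skeleton as the paper's: both write $\mathbb{E}[\min_k D_k]=\int_0^\infty \big(\mathbb{P}(D_1>r)\big)^M\,dr$ for the distance $D_k=|\tilde{s}^{\prime*}-s'_k|$ and integrate the $M$-th power of the survival function. The difference is in how that survival function is handled. The paper computes it exactly as a piecewise expression in $y$ and $|c|=|\tilde{s}^{\prime*}|$, splits the integral into two pieces, and arrives at the exact value $\tfrac{u}{M+1}\bigl(1+|c|^{M+1}/u^{M+1}\bigr)$, which is sharper and makes the dependence on the location of $\tilde{s}^{\prime*}$ explicit (the bound improves toward $\tfrac{u}{M+1}$ as the target moves to the centre). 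You instead replace the exact survival function by the uniform worst-case (endpoint) bound $\mathbb{P}(D_1>r)\le 1-\tfrac{r}{2u}$, which collapses the piecewise integration into a single one-line computation; this is cleaner and fully sufficient for the stated claim, at the cost of losing the exact location-dependent expression. One small caveat, which in fact applies equally to the paper's own proof: strictness fails in the degenerate case $|\tilde{s}^{\prime*}|=u$, where your tail bound is an equality for every $r\in[0,2u]$ (and the paper's exact formula also evaluates to $\tfrac{2u}{M+1}$). Your closing remark that the bound "is not tight \emph{for any} fixed target point" is therefore not quite right; the honest statement is that strict inequality holds whenever $\tilde{s}^{\prime*}$ lies in the open interval $(-u,u)$, which is the relevant configuration here, so it is worth stating that hypothesis explicitly rather than asserting uniform non-tightness.
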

The proof can be found in Appendix~\ref{sec:proofs}. 

This result demonstrates that the Monte Carlo optimization error \( d(\tilde{s}^{\prime *}, \hat{s}'^*) \) diminishes as the number of samples \( M \) increases.  
Incidentally, the Monte Carlo optimisation error could exhibit exponential dependence as the dimensionality of the state space increases. However, in multi-agent scenarios, partially-observed MDPs typically limit effective dimensionality growth, as agents rely on a restricted view of the overall state space.

Seemingly, there is an inherent trade-off involved in expanding the state set \( \hat{\mathcal{S}}_{s, a_i} \), i.e., expanding $u$ in the above one-dimension case, to cover more possibilities while managing the resultant error. A broader \( \hat{\mathcal{S}}_{s, a_i} \) reduces the gap between \(\tilde{s}^{\prime *}\) and the true best state \( s'^{*} \), as it increases the likelihood of encompassing \(s'^* \). This action effectively shrinks the error term \( d(s'^{*}, \tilde{s}^{\prime *}) \). Yet, increasing the size of $\hat{\mathcal{S}}_{s,a_i}$, i.e., increasing \(u\), also typically increases variability, leading to a larger error \( d(\tilde{s}^{\prime *}, \hat{s}'^*) \) and necessitating more samples to maintain a given level of precision.

\subsection{Implementation details}
\label{algorithm}

\begin{algorithm}[!ht] 
\caption{MMQ for each agent $i$}
\label{pseudocode_h}
\begin{algorithmic}
    \STATE {\bfseries Input:} Q-network $Q_i$, actor network $\pi_i$ and target networks $\overline{Q}_i$, $\overline{\pi}_i$; Quantile models $g^{\tau_l}_i$ and $g^{\tau_u}_i$; Reward network $R_i$; Replay buffer $D_i$
    \FOR{$t=1,\dots,T_{\mathrm{max}}$}
    \STATE All agents interact with the environment using random action (for initial $P$ steps) or $\varepsilon$-greedy policy and store experiences $(s,a_i,r,s')$ in $D_i$
    \STATE Sample a mini-batch from $\mathcal{D}_i$; 
    \STATE Update $g^{\tau_l}_i$ and $g^{\tau_u}_i$ by minimizing \eqref{eq:Quan_loss} 
    \STATE Draw $M$ samples from predicted bound $[g_i^{\tau_l}(\cdot), g_i^{\tau_u}(\cdot)]$ to construct set $\hat{\mathcal{S}}$ 
    \STATE Calculate the target value over $\hat{\mathcal{S}}$ using \eqref{eq:target}
    \STATE Update $Q_i$ multiple times by minimizing \eqref{eq:Q_loss}
    \STATE Update $R_i$ by minimizing \eqref{eq:R_loss}
    \STATE Update $\pi_i$ by minimizing \eqref{eq:A_loss}
    \STATE Update the target network $\overline{Q}_i$ and $\overline{\pi}_i$ 
    \ENDFOR
\end{algorithmic}
\end{algorithm}

To capture the range of possible next states, our implementation utilises two non-parametrised quantile models, \( g^{\tau_l}_i \) and \( g^{\tau_u}_i \), which employ neural networks to predict the \( \tau_l = 0.05 \) and \( \tau_u = 0.95 \) quantiles for each dimension of the next state. The neural network parameters, denoted by \( \phi^l_i \) and \( \phi^u_i \), are learnt by minimising the quantile loss according to their respective \( \tau \) values over samples from individual replay buffer $D_i$:
\begin{equation}
    L(\phi_i) = \mathbb{E}_{s,a_i \sim \mathcal{D}_i}[L^{\tau}(g_i^{\tau}(s, a_i; \phi_i) - s')],
    \label{eq:Quan_loss}
\end{equation}
where \( L^{\tau}(u) = \mathbb{I}(u > 0) \tau u + \mathbb{I}(u < 0) (1 - \tau) u \). For each \( (s, a_i) \) pair, the two quantile models predict bounds \([g_i^{\tau_l}(s, a_i), g_i^{\tau_u}(s, a_i)]\). We then construct the potential next state set \( \hat{\mathcal{S}} \) by including the true \( s' \) and \( M \) samples drawn from the quantile bounds. We also explored a parametrised multivariate Gaussian model as another method to estimate the possible next states, detailed in Appendix \ref{Quantile_Gau}.

Each agent also learns a Q-network, parameterised by \( \theta_i \), and represented as \( Q_i(s, a_i; \theta_i) \). The target value for the Q-network update is given by:
\begin{equation} 
Y_i(s, a_i; \theta_i) = \max_{\hat{s}' \in \hat{\mathcal{S}}} \left( R_i(s, s'; \psi_i) + \gamma \max_{a'_i} Q_i(\hat{s}', a'_i; \theta_i) \right), 
\label{eq:target}
\end{equation}
where \( R_i(s, s'; \psi_i) \) is an estimate from a learned reward model parameterised by \( \psi_i \). Here we apply a stop-gradient operator to the target value to prevent gradient flow back to the next state estimation process. This operation is crucial for maintaining stability in the learning dynamics by separating the optimization of the Q-network from the updates to the next state estimation. The loss function for optimising the Q-network parameter \( \theta_i \) is then given by
\begin{equation}
L(\theta_i) = \mathbb{E}_{s,a_i\sim \mathcal{D}_i}\left[ Q_i(s, a_i; \theta_i) - Y_i(s, a_i; \theta_i) \right]^2.
\label{eq:Q_loss}
\end{equation}

This loss aims to align the Q-network's predictions with the maximum expected return, considering both the immediate reward and the discounted future Q-values of the potential next states sampled from the estimated quantile bound. Additionally, the learned reward function \( R_i(s, s'; \psi_i) \), parameterised by \( \psi_i \), is trained to approximate the rewards for state transitions. The loss for this reward model is the mean squared error between the predicted and actual rewards,
\begin{equation}
L(\psi_i) = \mathbb{E}_{s\sim\mathcal{D}_i}[R_i(s, s'; \psi_i) - r]^2.
\label{eq:R_loss}
\end{equation}

To deal with continuous action spaces, each agent uses an actor network \( \pi_i(s; \rho_i) \), parameterised by \( \rho_i \), to learn its policy. The loss function for the actor network, aimed at maximising the Q-value, is 
\begin{equation}
L(\rho_i) = \mathbb{E}_{s\sim\mathcal{D}_i}[-Q_i(s, \pi_i(s; \rho_i); \theta_i)].
\label{eq:A_loss}
\end{equation}

The training process is summarised in Algorithm \ref{pseudocode_h}, and the full source code is available at \href{https://github.com/Tingz0/Maxmax\_Q\_learning}{\texttt{https://github.com/Tingz0/Maxmax\_Q\_learning}}. The full algorithm interleaves the optimisation of various constituent models, allowing agents to adaptively learn and improve their policies based on their own experiences and the evolving environmental dynamics. Our implementation incorporates two key strategies. First, a delayed update approach for the actor network relative to the critic network, where the critic is updated 10 times more frequently to maintain stability \citep{fujimoto2018addressing}. Second, negative reward shifting \citep{sun2022exploit}, which enhances our double-max-style updates (see also Appendix \ref{app:negative}).

\section{Experimental results}

\subsection{Environments} We evaluated the MMQ algorithm in three types of cooperative MARL environments characterized by the need for complex coordination among agents; see Figure~\ref{fig:all_env} for an overview.

\begin{figure}[!ht]
         \centering
        \begin{minipage}[b]{0.38\textwidth}
        \centering
        \captionsetup{font=footnotesize}
        \begin{subfigure}[b]{0.51\textwidth}
         \includegraphics[width=\textwidth]{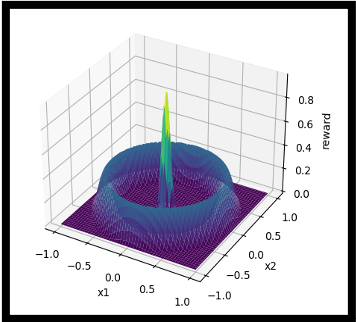}
         \caption{Differential Game (DG)}
         \label{fig:env_DG}
        \end{subfigure}
        \vfill
        \begin{subfigure}[b]{0.51\textwidth}
         \centering
         \includegraphics[width=\textwidth]{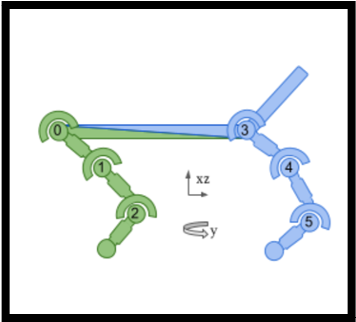}
         \caption{Half-Cheetah 2x3}
         \label{fig:env_Half}
     \end{subfigure}
     \end{minipage}
     \hspace{-0.05\textwidth}
     \begin{minipage}[b]{0.58\textwidth}
     \begin{subfigure}[b]{\textwidth}
         \centering
         \includegraphics[width=\textwidth]{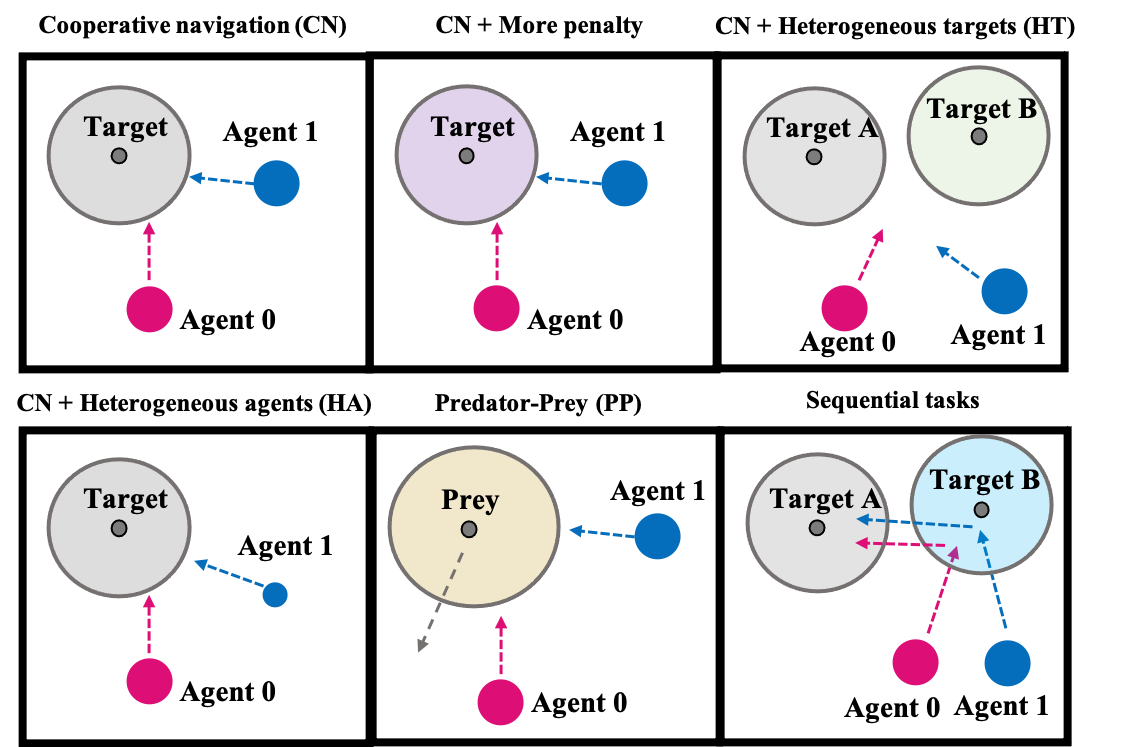}
         \caption{MPE scenarios}
         \label{fig:env_MPE}
     \end{subfigure} 
      \end{minipage}
    \caption{Task visualization. (a) \textit{Differential Game}(DG): agents need to cross a wide zero-reward area to move to the center to gain the optimal reward. (b) \textit{Half-Cheetah 2x3}: the Half-Cheetah 2x3 scenario in MAmujoco domain; (c) \textit{MPE scenarios}; \textit{Cooperative navigation}(CN): two agents need to enter the grey area of the target together to gain the reward, the solo entry would induce a penalty. \textit{CN + More penalty}: Same task as CN but with more penalty for solo entry; \textit{CN + HT}: Agents could choose to approach one of the two Targets with different reward settings; \textit{CN+HA}: same task as CN but two agents have different sizes and velocity; \textit{Predator-Prey(PP)}: two agents need to enter the grey area of a pre-trained prey. \textit{Sequential Task}: two agents need to first go through the grey area of Target B and then enter the grey of Target A with the same RO reward design as CN.}
    \label{fig:all_env}
\end{figure}


\paragraph{Differential Games} 

We adapted this environment from \cite{jiang2022i2q}, where $N$ agents move within the range $[-1,1]$. At each time step, an agent indexed by $i$ selects an action $a_i \in [-1,1]$. The state of this agent then transitions to $\operatorname{clip}\{x_i + 0.1 \times a_i, -1, 1\}$, where $x_i$ is the previous state and the $\operatorname{clip}(y, y_{\mathrm{min}}, y_{\mathrm{max}})$ function restricts $y$ within $[y_{\mathrm{min}}, y_{\mathrm{max}}]$. The global state is the position vector $(x_1, x_2)$. The reward function, detailed in Appendix~\ref{app:env_d}, assigns rewards following each action. A narrow optimal reward region is centred, surrounded by a wide zero-reward area and suboptimal rewards at the edges (see DG in Figure~\ref{fig:all_env}). This setup can lead to RO problems as agents might prefer staying in larger suboptimal areas.

\paragraph{Multiple Particle Environment} 
We designed six variants of cooperative navigation tasks with RO rewards as shown in Figure~\ref{fig:all_env}. The common goal is for two disk-shaped agents, \( D_1 \) and \( D_2 \), to simultaneously reach a disk-shaped target. To encourage coordination, we introduce a penalty for scenarios where only one agent is within a certain distance from the target. Specifically, we define a disk \( D \) centered on the target with radius \( r_D \) and penalize agents if only one is within \( D \). This setup is designed to illustrate the RO problem, where agents might prefer staying outside \( D \) rather than risk being the only one inside it. The task difficulty increases as the radius \( r_D \) decreases. The reward function, designed to reflect the RO problem, is defined as:
\[
r_{\mathrm{CN}} =
\begin{cases*}
R_{in} & if \( D_i \cap D \neq \emptyset \), \( i=1,2 \) \\
R_{out} & if \( (D_1 \cup D_2) \cap D = \emptyset \) \\
R_{out} - p & otherwise.
\end{cases*}
\]
The rewards for the three cases should satisfy \( R_{out} - p < R_{out} < R_{in} \). Detailed descriptions of \( R_{out} \) and \( R_{in} \) for different settings are provided in Appendix \ref{app:env_d}. In task \textbf{CN}, the penalty for solo entry into the circle is \( p=0.2 \); in task \textbf{CN+More Penalty}, the penalty increases to \( p=0.5 \) for entering the circle alone; in task \textbf{CN+Heterogeneous Agents (HA)}, two agents performing the \textbf{CN} task are heterogeneous, having different sizes and velocities; in task \textbf{CN+Heterogeneous Targets (HT)}, there are two targets, where entering the circle of target \( A \) follows the previous RO design, and entering the circle of target \( B \) incurs no RO penalty but offers a reward lower than \( R_{in } \); in the sub-optimal scenario, agents might only enter the circle of target \( B \); in task \textbf{Sequential Task}, agents must coordinate over a longer period—they could either directly reach target \( A \) with the same RO reward as before or first reach target \( B \) to pick up cargo, then receive a bonus each step (a higher \( R_{in } \)) when they later enter target \( A \) together; in task \textbf{Predator-Prey (PP)}, two predators (which we control) and one prey, who interact in an environment with two obstacles. The prey, trained using MADDPG \citep{lowe2017multi}, is adept at escaping faster than the predators. The predators need to enter the prey's disk together to receive the reward \( R_{in } \).

\paragraph{Multi-agent MuJoCo Environment} We employ the Half-Cheetah 2x3 scenario from the Multi-agent MuJoCo framework \citep{de2020deep}. This environment features two agents, each controlling three joints of the Half-Cheetah robot via torque application. It presents a partial observability setting, with each agent accessing only its local observations. We implement an RO reward structure designed to necessitate high coordination between agents. The reward function \( r_v \) is defined as: $-7$ if $v < v_l$, $-9$ if $v_l \leq v \leq v_u$, and $-2$, otherwise; 
where \( v_l = 0.035/dt \), \( v_u = 0.04/dt \), and \( dt = 0.05 \). This structure rewards agents for moving forward at speeds exceeding \( v_u \), penalizes speeds between \( v_l \) and \( v_u \), and provides a moderate penalty for very low speeds. Agents failing to overcome the RO problem may settle for maintaining low speeds to avoid the harshest penalty.

\begin{figure}[t]
  \centering
  \includegraphics[width=\linewidth]{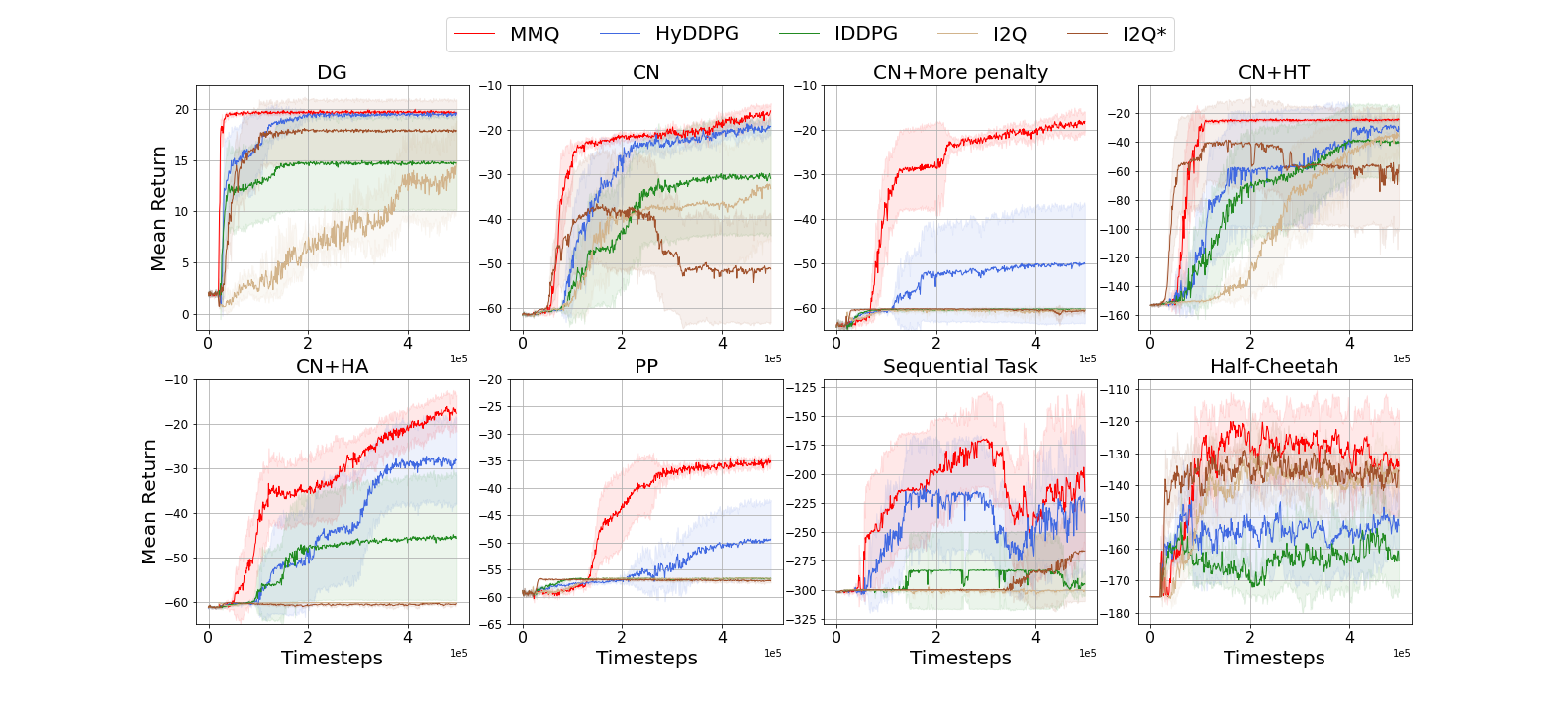}
  \caption{Performance comparison for two-agents setting in DG, MPE scenarios and Half-Cheetah}
  \label{fig:all_per}
\end{figure}

\begin{table*}[!ht]
\caption{Settings with $N=2$ agents. Mean Returns and $95\%$ Confidence Interval (over eight seeds) for all algorithms at the end of training. Values are bolded if their confidence intervals overlap with the maximum value. } 
\centering
\scalebox{0.8}{%
\begin{tabular}{p{2.3cm}p{2.8cm}p{2.5cm}p{2.3cm}p{2.5cm}p{2.3cm}p{2.5cm}}
\hline
Setting &Setting &MMQ &IDDPG &HyDDPG &I2Q* &I2Q\\
\hline
DG &$N=2$ &\bf{19.55$\pm$0.16}  &14.67$\pm$4.61  &\bf{19.47$\pm$0.17}  &\bf{17.84$\pm$3.01} & $14.62\pm$4.23 \\
\hline
\multirow{2}{=}{MPE Tasks} 
&CN & \bf{-15.66$\pm$1.75}  & -30.91$\pm$12.78 & \bf{-19.12$\pm$1.56} & -51.24$\pm$12.01 & -33.16$\pm$11.82 \\
&CN+more penalty & \bf{-18.01$\pm$2.24}  & -60.25$\pm$0.07 & -50.12$\pm$13.24 & -60.55$\pm$0.36 & -60.54$\pm$0.70 \\
&CN+HT & \bf{-24.25$\pm$0.49}  & \bf{-40.34$\pm$26.01} & \bf{-30.93$\pm$6.92} & \bf{-56.04$\pm$40.80} & \bf{-35.95$\pm$15.83} \\
&CN+HA & \bf{-17.63$\pm$4.12}  & -45.76$\pm$13.99 & \bf{-28.21$\pm$9.71} & -60.49$\pm$0.15 &-60.25$\pm$0.03 \\
&PP & \bf{-35.28$\pm$0.40}  & -56.63$\pm$0.11 & -49.40$\pm$7.18 & -57.10$\pm$0.33 & -56.67$\pm$0.14 \\
&Sequential Task  & \bf{-215.09$\pm$59.97} & -295.31$\pm$9.75 & \bf{-233.55$\pm$53.21} & -300.53$\pm$0.45 & \bf{-266.42$\pm$43.37} \\
\hline
Half-Cheetah & $2\times3$  &\bf{-134.09$\pm$16.05}  & -163.81$\pm$9.61 & -152.66$\pm$10.32 & \bf{-135.65$\pm$4.60} & \bf{-140.94$\pm$8.55} \\
\hline
\end{tabular}}
\label{tab:performance}
\end{table*}

\begin{table*}[!ht]
\caption{Setting with more than $2$ agents. Mean Returns and $95\%$ Confidence Interval (over eight seeds) for all algorithms at the end of training. Values are bolded if their confidence intervals overlap with the maximum value.} 
\centering
\scalebox{0.8}{%
\begin{tabular}{p{2.3cm}p{2.6cm}p{2.5cm}p{2.3cm}p{2.3cm}p{2.3cm}p{2.3cm}}
\hline
Setting &Setting &MMQ &IDDPG &HyDDPG &I2Q* &I2Q\\
\hline
\multirow{2}{=}{DG} 
&$N=3$ &\bf{19.51$\pm$0.15}  &15.45$\pm$3.76 &\bf{16.17$\pm$4.11}  &15.95$\pm$3.78  & $8.19\pm$3.85 \\
&$N=4$ &\bf{20.09$\pm$0.10}  &14.09$\pm$4.22  &12.77$\pm$4.56  &\bf{16.31$\pm$4.28} & $12.59\pm$4.45 \\
&$N=5$ &\bf{20.44$\pm$0.11}  &13.94$\pm$4.27  &16.02$\pm$3.97  &\bf{16.91$\pm$3.30} & $2.98\pm$0.70 \\
\hline
\multirow{2}{=}{MPE Tasks} 
&CN (N=3) & \bf{-34.79$\pm$2.75}  & -60.11$\pm$17.06 & \bf{-45.47$\pm$15.48} & \bf{-50.86$\pm$17.68} & -54.89$\pm$17.49 \\
&PP (N=3) & \bf{-37.86$\pm$1.01}  & -49.64$\pm$2.14 & -46.16$\pm$4.84 & -52.52$\pm$0.26 & -51.86$\pm$0.95 \\
\hline
\end{tabular}}
\label{tab:performance_sca}
\end{table*}

\begin{figure*}[!ht]
         \centering
     \begin{subfigure}[b]{0.7\textwidth}
         \includegraphics[width=\textwidth]{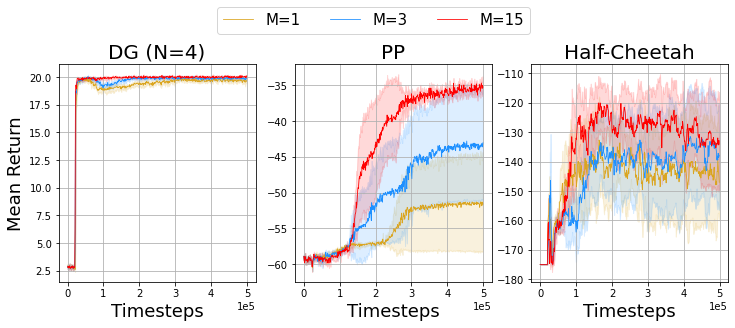}
         \caption{}
         \label{fig:abla}
     \end{subfigure}
     \begin{subfigure}[b]{0.25\textwidth}
         \centering
         \includegraphics[width=\textwidth]{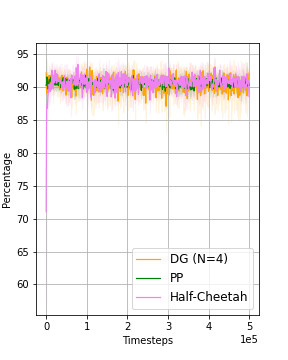}
         \caption{}
         \label{fig:pred_bound_dim}
     \end{subfigure} \\
    \caption{(a) Ablation study for different sample number $M$ in three tasks; (b) Percentage of each dim of true next states fall within the predicted quantile bound for three tasks}
\end{figure*}

\subsection{Baselines}
Our benchmarks include comparisons with three baseline algorithms: Ideal Independent Q-Learning (I2Q), Independent Deep Deterministic Policy Gradient (IDDPG), and Hysteretic DDPG (HyDDPG). We differentiate between two versions of I2Q: the original implementation by \cite{jiang2022i2q}, which we refer to as I2Q*, involves multiple updates of all network components after every 50 interaction steps. In contrast, our implementation, denoted as I2Q, updates only the critic network multiple times every 50 steps. This distinction allows us to more accurately assess the performance impact of these differing update strategies. 

\subsection{Experiment Results}
To compare the performance among different baselines, we consider the mean episode return over $8$ seeds, where the episode return is defined as the accumulated reward $\sum_{t=0}^T r_t$ for the whole episode, with $T$ being the episode length.

\paragraph{Differential Games} 

Our evaluations, depicted in Figure~\ref{fig:all_per} and Table~\ref{tab:performance}, show that MMQ outperforms other algorithms with 15 samples drawn from the quantile bounds predicted by two quantile models. HyDDPG and I2Q$^*$ also perform well. Interestingly, the performance of HyDDPG and IDDPG surpasses that reported for I2Q in \cite{jiang2022i2q}, possibly due to our implementation's emphasis on updating the critic network more frequently than the actor network to stabilize training. However, I2Q learns much slower compared to I2Q$^*$, which updates all modules rather than just the critic multiple times. These update strategies have different effects on various algorithms. With more agents, the performance of other algorithms degrades slightly, showing higher variability, as shown in Table~\ref{tab:performance_sca}. MMQ consistently identifies the optimal region in all test cases, demonstrating its higher sample efficiency and scalability. We also tested MMQ in a stochastic version of this game (Appendix~\ref{app:stochastic}), confirming that MMQ still outperforms the baselines under stochastic state transitions and reward dynamics.

\paragraph{Multiple Particle Environment}
As shown in Figure~\ref{fig:all_per} and Table~\ref{tab:performance}, our algorithm, using 15 samples from the predicted quantile bounds, successfully overcomes the Relative Over-generalisation (RO) problem in all settings. HyDDPG performs particularly well, especially in the lower penalty scenarios, such as \textbf{CN}, \textbf{CN+HT}, and \textbf{CN+HA}. In \textbf{CN} and \textbf{CN+HT}, I2Q$^*$ initially demonstrated some ability to solve the task but its performance deteriorated over time. We observed that the learned Q-values of some seeds increased rapidly and incorrectly simultaneously, despite setting the weight of the \(Q^{\mathrm{ss}}(s,s')\) value to a minimal parameter during the update process. This might be due to the challenges in computing effective \(Q^{\mathrm{ss}}(s,s')\) values in I2Q, leading to less accurate predictions of optimal states and resulting in cumulative estimation errors over time. With a higher solo-entry penalty in \textbf{CN+More Penalty}, all baselines' performance significantly declines, remaining stuck in suboptimal areas except for HyDDPG, which could learn to a limited extent. Our sampling-based approach, however, demonstrates robustness even with the increased RO problem. 

\textbf{PP} presents a more challenging task compared to the above settings due to its larger state space, more agents, and fast-moving prey that the predators must catch. Our results, illustrated in Figure~\ref{fig:all_per}, show our algorithm successfully overcomes the RO problem, demonstrating higher mean return and greater sample efficiency than other baselines. 

In the \textbf{Sequential Task}, both our MMQ and HyDDPG initially learned to enter target \(A\) directly. After discovering that entering target \(B\) could yield a bonus, they deliberately targeted \(B\), which led to a temporary decline in performance. MMQ recovered quicker than HyDDPG. I2Q began to show learning towards the end of the training while IDDPG failed in this task.

We also tested our algorithm in the \textbf{CN} and \textbf{PP} settings with an additional agent (as shown in Table~\ref{tab:performance_sca}). With one more agent, the state dimension increases and the transition dynamics become more complex. In \textbf{CN} with three agents, the results for HyDDPG and I2Q$^*$ showed large variability, indicating that these methods were effective only under certain initial conditions. In \textbf{PP} with three agents, which is more challenging, all baselines were stuck in suboptimal areas. However, our algorithm still managed to overcome the RO problem, demonstrating scalability with an increased number of agents. Further tests with the default reward setting (detailed in Appendix~\ref{app:dense}) show that our algorithm matches the performance of other baselines in settings without significant RO problems.

\paragraph{Multi-agent MuJoCo environment}
In this setting, we also utilized 15 samples drawn from the predicted quantile bounds. As depicted in Figure~\ref{fig:all_per}, MMQ remains competitive in this challenging environment, consistently surpassing other baselines during the latter half of the training period. Given the inherent complexity of the task, we maintained a mild level of the RO problem to preserve feasibility. This explains why I2Q was able to perform well, despite its sensitivity to RO problems noted in other environments. We provided results for one more partitions of Half-Cheetah (Half-Cheetah 4|2) and two partitions of Ant (Ant $2\times4$ and $4\times2$) with the same reward setting in Appendix \ref{app:more_Mamujoco}. In these tasks, MMQ consistently outperforming other baselines.

\paragraph{Ablation study: number of samples}

We conducted an ablation study to investigate the effect of varying the number of samples across three different environmental settings, as shown in Figure~\ref{fig:abla}. The results demonstrated that MMQ, even with just one sample, outperformed baselines in the DG and Half-Cheetah environments. In the PP environment, using one sample was slightly less effective than HyDDPG, but using three or more samples consistently outperformed all baselines. Additionally, increasing the number of samples \( M \) appeared to accelerate learning across the three settings, aligning with our theoretical analysis. Furthermore, we included a study (see Appendix~\ref{app:variation}) that employed a small ensemble of quantile models. Although this ensemble enlarged the predicted bounds and enhanced the percentage of true next states within these bounds, as shown in Figure~\ref{fig:ablaE_MeanDim}, it did not lead to further performance improvements. Thus, we did not incorporate the ensemble approach in the final results for simplicity.

To demonstrate the effectiveness of the quantile model, Figure~\ref{fig:pred_bound_dim} shows the percentage of each dimension of the true next states that fall within the predicted quantile bounds during the learning process for three environment settings. The percentage is calculated as follows: For each sample in the mini-batch, consisting of \( n \) samples, during the update process, we have the predicted bound \([l_d, u_d]\) for each dimension \( d \) of the state with value \( s_d \). The state has a total of \( D \) dimensions. If the state value \( s_d \) falls within the predicted bound, i.e., \( l_d \leq s_d \leq u_d \), we increment a count. If the total count across all dimensions and samples is \( P \), the percentage is then calculated as \(\frac{P}{n \cdot D} \times 100\%\).

The percentage is already high initially for the DG and CN environments but is a bit lower for Half-Cheetah, possibly due to its more complex dynamics compared to the other two. These results indicate that the quantile model can effectively capture most state changes as expected, suggesting that \( \hat{\mathcal{S}}_{s,a_i} \) closely reflects the true set \( \mathcal{S}_{s,a_i} \), as analyzed in Section~\ref{sec:mmq}.

\section{Discussion and Conclusions}

In this work, we introduced MaxMax Q-learning (MMQ), a novel algorithm designed to mitigate the Relative Over-generalisation problem in multi-agent collaborative tasks through the use of quantile models and optimistic sampling. Our key theoretical contribution establishes a connection between the accuracy of estimating the best next state and the convergence towards globally optimal joint policies within the MMQ framework, providing a solid foundation for understanding the algorithm's behaviour and performance.

Empirically, we demonstrated the effectiveness of MMQ across three diverse tasks, showing its ability to outperform existing baselines in terms of convergence speed, sample efficiency, and final performance. A notable outcome is MMQ’s scalability, as it can accommodate an increased number of agents while achieving superior performance with minimal samples, which reduces computational overhead—a crucial factor in practical multi-agent systems.

Despite these promising results, several limitations of MMQ remain. First, the Monte Carlo optimisation error in MMQ may exhibit exponential dependence as the dimensionality of the state space increases. However, this issue is somewhat mitigated in multi-agent settings, where the use of partially-observed MDPs limits the effective dimensionality. Second, although MMQ has demonstrated scalability in our experiments, challenges may arise in environments with very large populations of agents, where increased computational costs and coordination complexity could impact performance.

Looking ahead, future work may aim to address these limitations and further enhance MMQ’s capabilities. One promising direction involves developing adaptive mechanisms to gauge the informativeness of observations about other agents, which could improve the robustness of reward function learning, especially in scenarios with partial or noisy observations. Additionally, relaxing the current assumption of independence across dimensions in the quantile model by predicting the covariance matrix could capture more complex state dynamics, potentially leading to more accurate estimations. Further improvements in scalability might be achieved by reducing the number of samples required as the state space grows, thereby lowering computational demands in high-dimensional and large-agent scenarios. Together, these future developments aim to overcome current limitations and extend the applicability of MMQ to a broader range of multi-agent scenarios.

\subsection*{Acknowledgments}
\noindent
TZ acknowledges support from the UK Engineering and Physical
Sciences Research Council (EPSRC EP/W523793/1), through the
Statistics Centre for Doctoral Training at the University of Warwick.
\noindent
GM acknowledges support from a UKRI AI Turing Acceleration Fellowship (EPSRC EP/V024868/1).

\bibliography{main}

\newpage
\appendix
\onecolumn

\section{Environments: further details}
\label{app:env_d}

\textbf{Differential Games.} In the differential games \citep{jiang2022i2q}, \(N\) agents can move within the range \([-1,1]\). At each time step, agent \(i\) selects an action \(a_i \in [-1,1]\), and the state of agent \(i\) is updated to \(\operatorname{clip}\{x_i + 0.1 \times a_i, -1, 1\}\), where \(x_i\) is the previous state of agent \(i\). The \(\operatorname{clip}(y,y_{\mathrm{min}},y_{\mathrm{max}})\) function constrains \(y\) to the interval \([y_{\mathrm{min}},y_{\mathrm{max}}]\). The global state is represented by the position vector \((x_1, x_2)\). Agents receive rewards after each action, with the reward function for each time step defined as:
\begin{align*}
    r  =  
    \begin{cases*}
         a (\cos (l\pi / m) + 1) & if \(l \leq m\)  \\
         0 & if \(m < l \leq 0.6\) \\
         b (\cos (5\pi (l - 0.8)) + 1) & if \(0.6 < l \leq 1\) \\
         0 & if \(l > 1\)
    \end{cases*} 
\end{align*}
where \(a\) and \(b\) define the optimal and sub-optimal reward values, respectively; \(m\) determines the width of the zero-reward area between the sub-optimal and optimal reward zones. We set \(a = 0.5\), \(b = 0.15\), and \(m = 0.13(N-1)\). The location metric, \(l\), is calculated as:
\[
l = \sqrt{\frac{2}{N}\sum_{i=0}^{N}x_i^2}.
\]

As illustrated for \(N=2\) in Figure~\ref{fig:all_env}, there is a narrow optimal reward region at the center, surrounded by a wide zero-reward area. The edges also offer suboptimal rewards. This configuration makes it unlikely for decentralized agents to randomly converge on the optimal region, often resulting in agents getting stuck in the large suboptimal zones. This setup presents significant challenges for exploration and coordination. To assess these dynamics, agent positions are randomly initialized at the start of each 25-timestep episode. The combination of a narrow optimal reward zone and decentralized partial observability creates a challenging multi-agent exploration problem. Success in this environment requires effectively mitigating the RO issue to discover joint policies that maximize cumulative reward.

\textbf{Multiple Particle Environment:} The primary objective is for two disk-shaped agents, \(D_1\) and \(D_2\), with centers \(x_1\) and \(x_2\) and radius \(r_{\mathrm{a}}\), to simultaneously reach a disk-shaped target or moving prey with center \(x_{\mathrm{t}}\) and radius \(r_{\mathrm{t}}\). To foster coordination, we introduce a penalty for scenarios where only one agent is within a specific distance from the target. Specifically, we define a disk \(D\) centered on the target with radius \(r_D = r_{\mathrm{t}} + \alpha\) and penalize agents if only one is within \(D\). This setup is designed to illustrate the RO problem, where agents might prefer staying outside \(D\) rather than risking being the only one inside it. The task difficulty increases as \(\alpha\) decreases. For our experiments, we maintain a default value of \(r_{\mathrm{a}} = 0.15\), \(r_{\mathrm{t}} = 0.05\), and set \(\alpha=0.3\) for the Predator-Prey task and \(\alpha=0.2\) for other tasks. The reward function designed to reflect the RO problem is given by:
\[
r_{\mathrm{CN}} =
\begin{cases*}
R_{in} & if \(D_i \cap D \neq \emptyset\), \(i=1,2\) \\
R_{out} & if \((D_1 \cup D_2) \cap D = \emptyset\) \\
R_{out} - p & otherwise.
\end{cases*}
\]

Table \ref{tab:mpe_reward} outlines the specific reward parameters used across different settings, showcasing how the reward values are adjusted to enhance or mitigate the challenges posed by the RO problem. This table provides a clear overview of how penalties and rewards are configured to encourage effective coordination and exploration strategies among agents.

\begin{table*}[!h]
\caption{RO reward design for different scenarios in MPE task} 
\centering
\scalebox{0.8}{%
\begin{tabular}{p{4.2cm}p{3.2cm}p{3.2cm}p{2.3cm}}
\hline
Setting &$R_{out}$ &$R_{in}$ &$p$\\
\hline
CN & $-3(r_D + r_{\mathrm{a}})$ & $-3\min_{i=1,2} \| x_{\mathrm{t}} - x_i \|$ & 0.2\\
More Penalty & $-3(r_D + r_{\mathrm{a}})$ & $-3\min_{i=1,2} \| x_{\mathrm{t}} - x_i \|$ & $0.5$ \\
HT (Target A) &$-3$ &$0$ &$0.5$ \\
 HT (Target B) &$-3$ &$-2.5$ &$0$ \\
HA & $-3(r_D + r_{\mathrm{a}})$ & $-3\min_{i=1,2} \| x_{\mathrm{t}} - x_i \|$ & $0.2$\\
Predator-Prey &$-3(r_D + r_{\mathrm{a}})$ & $-3\min_{i=1,2} \| x_{\mathrm{t}} - x_i \|$ & $0.5$\\
Sequential (without cargo) &$-6$ &$-3$ &$0.5$\\
Sequential (with cargo) &$-6$ &$-0.5$ &$0.5$ \\
\hline
\end{tabular}}
\label{tab:mpe_reward}
\end{table*}

\newpage
\section{Proof of Theorems \ref{conv} and \ref{distance}}
\label{sec:proofs}

We first prove a theorem and lemma before moving on to the proof of Theorem~\ref{conv}.

\begin{theorem} \label{theorem_1}
Let $\mathcal{T}_{\hat{S}}$ be the operator defined by $\mathcal{T}_{\hat{S}}Q(s, a_i) = \max_{s' \in \hat{S}} (R(s, s') + \gamma \max_{a'_i}Q(s', a'_i))$, then $\mathcal{T}_{\hat{S}}$ is a contraction in sup-norm.
\end{theorem}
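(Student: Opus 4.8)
\textbf{Proof plan for Theorem~\ref{theorem_1}.}
The plan is to show directly that $\mathcal{T}_{\hat{S}}$ is a $\gamma$-contraction in the sup-norm $\|Q\|_\infty = \sup_{s,a_i}|Q(s,a_i)|$, which is the standard route for Bellman-type operators. Fix two bounded functions $Q_1, Q_2$ and an arbitrary pair $(s, a_i)$. The key observation is that the reward term $R(s,s')$ does not depend on $Q$, so it cancels when we compare $\mathcal{T}_{\hat{S}}Q_1(s,a_i)$ and $\mathcal{T}_{\hat{S}}Q_2(s,a_i)$; only the $\gamma \max_{a_i'} Q(s',a_i')$ term matters.

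The first step is the elementary inequality that a difference of suprema is bounded by the supremum of the difference: for any two functions $h_1, h_2$ on a common domain, $|\sup_x h_1(x) - \sup_x h_2(x)| \le \sup_x |h_1(x) - h_2(x)|$. I would apply this twice — once for the outer $\max_{s' \in \hat{S}}$ and once for the inner $\max_{a_i'}$. Concretely, writing $h_j(s') = R(s,s') + \gamma \max_{a_i'} Q_j(s', a_i')$, we get
\begin{align*}
|\mathcal{T}_{\hat{S}}Q_1(s,a_i) - \mathcal{T}_{\hat{S}}Q_2(s,a_i)|
&\le \sup_{s' \in \hat{S}} \Big| \gamma \max_{a_i'} Q_1(s',a_i') - \gamma \max_{a_i'} Q_2(s',a_i') \Big| \\
&\le \gamma \sup_{s' \in \hat{S}} \sup_{a_i'} |Q_1(s',a_i') - Q_2(s',a_i')| \\
&\le \gamma \|Q_1 - Q_2\|_\infty.
\end{align*}
Since $(s,a_i)$ was arbitrary, taking the supremum over $(s,a_i)$ on the left gives $\|\mathcal{T}_{\hat{S}}Q_1 - \mathcal{T}_{\hat{S}}Q_2\|_\infty \le \gamma \|Q_1 - Q_2\|_\infty$, and as $\gamma \in [0,1)$ this is a contraction.

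There is no real obstacle here; the only points requiring a word of care are (i) noting that $\hat{S}$ is non-empty so the outer maximum is well defined, and (ii) confirming boundedness is preserved — if $R$ is bounded (which follows from the Lipschitz assumption on a bounded state space, or can simply be assumed) and $Q$ is bounded, then $\mathcal{T}_{\hat{S}}Q$ is bounded, so the operator maps the space of bounded functions to itself and the sup-norm is finite. I would state these briefly and then invoke the two-fold "sup of differences" lemma as the heart of the argument. If one wants to be fully rigorous about the $\max$ over a possibly infinite $\hat{S}$ being attained, one can replace $\max$ by $\sup$ throughout; the inequalities are unchanged.
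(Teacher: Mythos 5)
Your proof is correct, and it is the cleaner of the two. Both you and the paper are running the standard Bellman-contraction argument, but the execution differs at the decisive step. The paper writes the difference of the two maxima as $h_1(s'_1)-h_2(s'_2)$, where $s'_1$ and $s'_2$ are the (generally distinct) maximizers for $Q_1$ and $Q_2$, and then splits off a term $\max_{s,a_i}|R(s,s'_1)-R(s,s'_2)|$ which it subsequently discards, and bounds $|\max_{a'_i}Q_1(s'_1,a'_i)-\max_{a'_i}Q_2(s'_2,a'_i)|$ by $\max_{s,a'_i}|Q_1(s,a'_i)-Q_2(s,a'_i)|$ even though the two maxima are taken at different states. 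Neither of these steps is valid as written without an extra argument: with $s'_1\neq s'_2$ the reward difference need not vanish, and the $Q$-difference at two different states is not controlled by the pointwise sup-norm. Your route via the inequality $|\sup_x h_1(x)-\sup_x h_2(x)|\le\sup_x|h_1(x)-h_2(x)|$ sidesteps both problems, because inside $\sup_{s'}|h_1(s')-h_2(s')|$ the reward terms are evaluated at the \emph{same} $s'$ and cancel exactly, after which the inner max over $a'_i$ is handled by the same lemma. (The paper's argmax-based approach can be repaired by the usual trick of evaluating both functions at the single maximizer $s'_1$ when bounding $\mathcal{T}_{\hat S}Q_1-\mathcal{T}_{\hat S}Q_2$ from above, and symmetrically for the other direction, which again makes the rewards cancel; your lemma is just the packaged form of that trick.) Your side remarks on non-emptiness of $\hat S$, boundedness, and replacing $\max$ by $\sup$ are appropriate and do not change the argument.
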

\begin{proof}
Let $Q_1$ and $Q_2$ be two Q-functions, i.e., real-valued functions on $\mathcal{S} \times \mathcal{A}_i$. By definition of the operator $\mathcal{T}_{\hat{S}}$ and of the argmax, we can rewrite the sup-norm $\| \mathcal{T}_{\hat{S}}Q_1 - \mathcal{T}_{\hat{S}}Q_2\|_{\infty}$ as
\begin{align}
    \| \mathcal{T}_{\hat{S}}Q_1 - \mathcal{T}_{\hat{S}}Q_2\|_{\infty} 
    &= \max_{s,a_i}\left| \max_{s'\in \hat{S}} \Big(R(s,s')+\gamma \max_{a'_i}Q_1(s',a'_i) \Big) - \max_{s'\in \hat{S}} \Big(R(s,s')+\gamma \max_{a'_i}Q_2(s',a'_i) \Big) \right| \\ 
    &= \max_{s,a_i}\left| (R(s,s'_1)+\gamma\max_{a'_i}Q_1(s'_1,a'_i)) - (R(s,s'_2)+\gamma\max_{a'_i}Q_2(s'_2,a'_i))\right|,
\end{align}
where $s'_j=\arg\max_{s'\in \hat{S}}(R(s,s')+\gamma\max_{a'_i}Q_j(s',a'_i))$, $j=1,2$. It follows that
\begin{align}
    \| \mathcal{T}_{\hat{S}}Q_1 - \mathcal{T}_{\hat{S}}Q_2\|_{\infty}
    & = \max_{s,a_i}\left| R(s,s'_1) - R(s,s'_2)+\gamma\max_{a'_i}Q_1(s'_1,a'_i) - \gamma\max_{a'_i}Q_2(s'_2,a'_i))\right| \\
    & \leq \max_{s,a_i}\left| R(s,s'_1) - R(s,s'_2) \right| +\gamma \max_{s,a_i}\left|\max_{a'_i}Q_1(s'_1,a'_i) - \max_{a'_i}Q_2(s'_2,a'_i))\right| \\
    &\leq \gamma  \max_{s,a_i}\left|\max_{a'_i}Q_1(s'_1,a'_i) - \max_{a'_i}Q_2(s'_2,a'_i))\right| \\
    & \leq \gamma\max_{s,a'_i} \left| Q_1(s, a'_i) - Q_2(s, a'_i) \right| \\
    & = \gamma \|Q_1-Q_2\|_{\infty},
\end{align}
which concludes the proof of the theorem.
\end{proof}

Therefore, according to the contraction mapping theorem, $\mathcal{T}_{\hat{S}}$ converges to a unique fixed point which we denote by $Q$. Additionally, we denote by $Q^*$ the optimal Q-function, which is the fixed point of $\mathcal{T}_{\mathcal{S}}$.

We consider the following assumptions:
\begin{enumerate}[label=A.\arabic*, noitemsep]
    \item \label{as:Rlip} $R(s, \cdot)$ is Lipschitz continuous for any $s \in \mathcal{S}$
    \item $Q^*$ is twice continuously differentiable function and Lipschitz continuous
    \item \label{as:order2} For any $s \in \mathcal{S}$ and any maximizer $a^*$ of $Q^*(s, \cdot)$, it holds that $a^*$ is a maximizer of order 2, i.e., there are constants $c$ and $\delta$ such that
    \[
    Q^*(s, a^*) - Q^*(s, a) \geq c \| a^* - a \|, \qquad \forall a \in \mathcal{A}_i, \| a^* - a \| \leq \delta.
    \]
\end{enumerate}

\begin{lemma} Define $q^*(s, s') = R(s, s') + \gamma \max_{a'_i} Q^*(s', a'_i)$ and consider Assumptions~\ref{as:Rlip}-\ref{as:order2}. If $d(s'^{*}, \hat{s}'^{*}) \leq \epsilon$, then there exists a constant $K$ such that $|q^*(s, s'^{*}) - q^*(s, \hat{s}'^{*})| \leq K \epsilon$ and where $s'^{*}, \hat{s}'^{*}$ are the best next state in $\mathcal{S}_{s, a_i}$ and $\hat{S}$, respectively.
\end{lemma}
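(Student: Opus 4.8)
The plan is to bound the quantity $|q^*(s, s'^{*}) - q^*(s, \hat{s}'^{*})|$ by splitting it into a reward term and a value term, namely
\begin{align*}
|q^*(s, s'^{*}) - q^*(s, \hat{s}'^{*})| \leq |R(s, s'^{*}) - R(s, \hat{s}'^{*})| + \gamma\Big| \max_{a'_i} Q^*(s'^{*}, a'_i) - \max_{a'_i} Q^*(\hat{s}'^{*}, a'_i)\Big|.
\end{align*}
The first term is immediately controlled by Assumption~\ref{as:Rlip}: since $R(s, \cdot)$ is Lipschitz with some constant $L_R$, and $d(s'^{*}, \hat{s}'^{*}) \leq \epsilon$, we get $|R(s, s'^{*}) - R(s, \hat{s}'^{*})| \leq L_R \epsilon$. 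So the work is entirely in bounding the second term by something of the form $(\text{const})\cdot\epsilon$.

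For the value term, I would first reduce the max-over-actions difference to a pointwise difference. Let $a^* = \arg\max_{a'_i} Q^*(s'^{*}, a'_i)$ and $\hat a^* = \arg\max_{a'_i} Q^*(\hat{s}'^{*}, a'_i)$; then the standard inequality $|\max_a g(a) - \max_a h(a)| \leq \max_a |g(a) - h(a)|$ (or a direct two-sided argument using the two argmaxes) gives
\begin{align*}
\Big| \max_{a'_i} Q^*(s'^{*}, a'_i) - \max_{a'_i} Q^*(\hat{s}'^{*}, a'_i)\Big| \leq \max_{a'_i}\big| Q^*(s'^{*}, a'_i) - Q^*(\hat{s}'^{*}, a'_i)\big|.
\end{align*}
Now I invoke the Lipschitz continuity of $Q^*$ from Assumption~A.2: for each fixed $a'_i$, $|Q^*(s'^{*}, a'_i) - Q^*(\hat{s}'^{*}, a'_i)| \leq L_Q\, d(s'^{*}, \hat{s}'^{*}) \leq L_Q \epsilon$, where $L_Q$ is a Lipschitz constant for $Q^*$ in its state argument (which exists since $Q^*$ is Lipschitz continuous and twice continuously differentiable). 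Combining, the value term is at most $\gamma L_Q \epsilon$, and setting $K = L_R + \gamma L_Q$ (or simply $K = L_R + L_Q$) yields $|q^*(s, s'^{*}) - q^*(s, \hat{s}'^{*})| \leq K\epsilon$, uniformly in $(s, a_i)$ since the Lipschitz constants do not depend on the pair.

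The main subtlety — and the reason Assumption~\ref{as:order2} is listed — is whether the bound needs the second-order maximizer condition at all for \emph{this} lemma. I suspect it does not enter here: a clean Lipschitz argument on $Q^*$ suffices for bounding $|q^*(s, s'^{*}) - q^*(s, \hat{s}'^{*})|$ given that the state displacement is already assumed to be at most $\epsilon$. Assumption~\ref{as:order2} is more naturally used in the companion argument that controls how the \emph{learned} $Q_i$ (rather than $Q^*$) behaves, or in translating a small gap in $q$-values back into a small gap in the resulting policy/argmax — i.e., it is needed for Theorem~\ref{conv} but arguably not for the lemma in isolation. So the plan is: (i) triangle-split into reward and value parts; (ii) bound the reward part by $L_R\epsilon$ via \ref{as:Rlip}; (iii) collapse the $\max_{a'_i}$ difference to a pointwise one; (iv) bound that by $L_Q\epsilon$ via Lipschitzness of $Q^*$; (v) set $K = L_R + \gamma L_Q$ and note uniformity in $(s,a_i)$. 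The only place I would be careful is step (iii), making sure the two-sided inequality between the two maxima (attained at possibly different actions at the two states) is handled correctly, and confirming that no hidden dependence of the constants on $(s, a_i)$ creeps in.
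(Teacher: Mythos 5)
Your proof is correct, and for the value term it takes a genuinely different and more elementary route than the paper's. Both arguments start with the same triangle split into a reward part, bounded by $K_R\epsilon$ via Assumption~\ref{as:Rlip}, and a value part. For the value part the paper does \emph{not} collapse the difference of maxima to a pointwise difference; instead it invokes a stability-of-maximizers result (Theorem~6.2 of Still's lecture notes), which is precisely where Assumption~\ref{as:order2} and the twice continuous differentiability of $Q^*$ enter, to show that the maximizing action at $\hat{s}'^{*}$ lies within $L\, d(s'^{*},\hat{s}'^{*})$ of the maximizing action at $s'^{*}$, and then applies the joint Lipschitz continuity of $Q^*$ in $(s,a)$ to the two argmax pairs, obtaining $K = K_R + \sqrt{1+L^2}\,K_Q$. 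Your use of $|\max_a g(a) - \max_a h(a)| \le \max_a |g(a)-h(a)|$ avoids tracking the argmax entirely, needs only Lipschitz continuity of $Q^*$ in the state argument uniformly over actions (which the joint Lipschitz assumption supplies), and yields the tighter constant $K = K_R + \gamma K_Q$ without Assumption~\ref{as:order2}. Your suspicion about that assumption is right: within this lemma it is used by the paper only to drive the maximizer-perturbation theorem, so your argument renders it superfluous here; it also sidesteps a subtlety in the paper's version, namely that the perturbation theorem only guarantees a nearby \emph{local} maximizer and does not by itself identify it with the global maximizer $\hat{a}'^{*}_i$. What the paper's heavier machinery buys is explicit control of how the optimal action moves with the state, which is not needed for the stated bound. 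Just make sure to spell out the two-sided evaluation at the two argmaxes in your step (iii), as you already flag; with that, the proof is complete and uniform in $(s,a_i)$.
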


\begin{proof}
Based on the definition of $q^*$ and on the triangle inequality, it holds that
\begin{equation}
\begin{aligned}
    |q^*(s, s'^{*}) - q^*(s, \hat{s}'^{*})|
    &= |R(s, s'^{*}) + \gamma \max_{a'_i}Q^*(s'^{*}, a'_i) - R(s, \hat{s}'^{*}) - \gamma\max_{a'_i}Q^*(\hat{s}'^{*}, a'_i)|\\
    &\leq |R(s, s'^{*}) - R(s, \hat{s}'^{*})| + \gamma|\max_{a'_i} Q^*(s'^{*}, a'_i) - \max_{a'_i} Q^*(\hat{s}'^{*}, a'_i)| \\
    & \leq \epsilon K_R  + \gamma|\max_{a'_i} Q^*(s'^{*}, a'_i) - \max_{a'_i} Q^*(\hat{s}'^{*}, a'_i)|,
\end{aligned}
\end{equation}
with $K_R$ the Lipschitz constant for $R(s, \cdot)$. Let $a'^{*}_i = \arg\max_{a'_i} Q^*(s'^{*}, a'_i)$ and $\hat{a}'^{*}_i = \arg\max_{a'_i} Q^*(\hat{s}'^{*}, a'_i)$. According to Theorem 6.2 in \citep{still2018lectures}, there are $\epsilon, L > 0$ such that for all $s' \in \mathcal{S}$ verifying $\| s' - s'^* \| \leq \epsilon$, there exists a local maximizer $\tilde{a}'_i$ verifying
\[
\| \tilde{a}'_i - a'^{*}_i \| \leq L \| s' - s'^{*} \|.
\]
It follows that $\| (s'^{*}, a'^*_i) - (\hat{s}'^{*}, \hat{a}'^*_i) \|^2 \leq \epsilon^2 (1 + L^2)$, and we finally obtain that
\[
|q^*(s, s'^{*}) - q^*(s, \hat{s}'^{*})| \leq \epsilon K_R + \epsilon \sqrt{1 + L^2} K_Q = K \epsilon,
\]
with $K_Q$ the Lipschitz constant for $Q$ and with $K = K_R + \sqrt{1 + L^2}K_Q$.
\end{proof}

\begin{proof}[Proof of Theorem~\ref{conv}]
We now need to consider the accumulation of error between time steps and therefore introduce an additional subscript $t$ on all the time-varying quantities. In addition, let $q(s, s') = R(s, s') + \gamma \max_{a'_i}Q(s', a'_i)$ be the function related to the approximate Q-function $Q$ and define $\Delta_{t+1} = |q^*(s_t, \hat{s}'^{*}_{t}) - q(s_t, \hat{s}'^{*}_{t})|$. Then, we have
\begin{equation}\label{recursion_1}
\begin{aligned}
|q^*(s_t, s'^{*}_{t}) - q(s_t, \hat{s}'^{*}_{t})|
& =|q^*(s_t, s'^{*}_{t})  - q^*(s_t, \hat{s}'^{*}_{t}) + q^*(s_t,  \hat{s}'^{*}_{t})  - q(s_t, \hat{s}'^{*}_{t}) | \\
& \leq K \epsilon + \Delta_{t+1}.
\end{aligned}
\end{equation}
We can manipulate the expression of $\Delta_{t+1}$ to make the left hand side of \eqref{recursion_1} appear, but at time $t+1$. First, we observe that
\begin{align*}
    \Delta_{t+1} & = |q^*(s_t, \hat{s}'^{*}_{t}) - q(s_t,  \hat{s}'^{*}_{t})| \\
    & = |R(s_t, \hat{s}'^{*}_{t}) + \gamma \max_{a'_{i,t+1}} Q^*(\hat{s}'^{*}_{t}, a'_{i,t+1}) - R(s_t, \hat{s}'^{*}_{t}) - \gamma \max_{a'_{i,t+1}}Q(\hat{s}'^{*}_{t}, a'_{i,t+1})|\\
    & = \gamma\Big|\max_{a'_{i,t+1}} Q^*(\hat{s}'^{*}_{t}, a'_{i,t+1}) - \max_{a'_{i,t+1}} Q(\hat{s}'^{*}_{t}, a'_{i,t+1})\Big|.
\end{align*}
The first term will have the optimal action at state $\hat{s}'^{*}_{t}$ as maximizer, and we denote $\hat{s}'^{*}_{t+1}$ the induced state. Similarly, we denote by $s'^{*}_{t+1}$ the state following from the optimal action under $Q$. It follows that
\begin{equation}\label{recursion_2}
\begin{aligned}
    \Delta_{t+1} & = \gamma\Big|R(\hat{s}'^{*}_{t},\hat{s}'^{*}_{t+1}) + \gamma \max_{a'_{i,t+2}} Q^*(\hat{s}'^{*}_{t+1}, a'_{i,t+2}) - R(\hat{s}'^{*}_{t},s'^*_{t+1}) - \gamma \max_{a'_{i,t+2}} Q(s'^{*}_{t+1}, a'_{i,t+2})\Big| \\
    & = \gamma \big|q^*(\hat{s}'^{*}_{t}, s'^{*}_{t+1}) - q(\hat{s}'^{*}_{t}, \hat{s}'^{*}_{t+1}) \big|\\
    & \leq \gamma K \epsilon + \gamma  \Delta_{t+2}.
\end{aligned}
\end{equation}
Based on \eqref{recursion_1} and \eqref{recursion_2}, we have:
\[
    \big|q^*(s_t, s'^{*}_{t}) - q(s_t, \hat{s}'^{*}_{t})\big| \leq  (1+\gamma^2+\cdots+\gamma^{T-1}) K \epsilon + \gamma^{T-1} \Delta_{t+T}   
\]
As $T$ goes to the infinity, we have
\[
    \big|q^*(s_t, s'^{*}_{t}) - q(s_t, \hat{s}'^{*}_{t})\big| = \frac{1}{1-\gamma}K\epsilon
\]
Since it holds that $|Q^*(s_t, a_{i,t}) - Q(s_t, a_{i,t})|= |q^*(s_t, s^{*}_{t+1}) - q(s_t, \hat{s}'^{*}_{t+1})|$, we have
\[
|Q^*(s_t, a_{i,t}) - Q(s_t, \hat{a}_{i,t})| \leq \frac{1}{1-\gamma}K\epsilon
\]
which concludes the proof of the theorem.
\end{proof}

\begin{proof}[Proof of Theorem~\ref{distance}]
Consider the random variables $Y_k = |  s'_k - \tilde{s}^{\prime *}|$ for all $k \in \{ 
1,\dots, M \}$. For simplicity, we used $c$ to denote $\tilde{s}^{\prime *}$ in the following. Their survival function is found to be
\begin{align*}
S_Y(y) & = P(|s'_k-c| > y ) \\ & = P(s'_k > c + y) + P(s'_k < c-y) \\
& =  \begin{cases*} \begin{aligned}  & 1, & y \leq 0 \\ & 1-\frac{y}{u},  & 0<y\leq u-|c| \\ & \frac{u + |c| - y}{2u}, & u-|c| < y < u + |c| \\&  0, & y \geq u + |c| \end{aligned} 
\end{cases*} 
\end{align*}
The survival function of $\min_{k = 1, \cdots, M} Y_k$ is $S_Y(y)^M$ and 
\begin{align*}
\mathbb{E}\Big[\min_{k=1,\dots,M} Y_k\Big] 
& = \int_{0}^u S_Y(y)^M dy \\ 
& = \int_0^{u-|c|} \left(1 - \frac{y}{u}\right)^M dy + \int_{u-|c|}^{u+|c|}\left(\frac{u+|c|-y}{2u}\right)^Mdy\\
& = \frac{-u}{M+1}\left(1-\frac{y}{u}\right)^{M+1}\bigg |_{0}^{u-|c|} + \frac{-2u}{M+1}\left(\frac{u + |c|-y}{2u}\right)^{M+1}\bigg |_{u-|c|}^{u+|c|} \\
& = \frac{u}{M+1}\left(1 + \frac{|c|^{M+1}}{u^{M+1}}\right)\\
& < \frac{2u}{M + 1},
\end{align*}
which concludes the proof of the theorem.
\end{proof}

\section{Ablation Study}
\subsection{Negative Reward Shifting}
\label{app:negative}

To enhance exploration and mitigate overestimation in our MMQ approach, we have integrated a negative linear reward shifting technique \citep{sun2022exploit}. This method involves a downward adjustment of the reward function (by subtracting a constant from all rewards) during training. Analogous to starting with a higher initial value in the neural value function, this adjustment assigns inflated Q-values to less frequently visited state-action pairs, thus prioritizing them during the learning process. Our implementation of this insight prioritizes optimal next states, which are initially less common than suboptimal ones. By initializing the value function at a higher level, these optimal states receive more favorable evaluations. This technique proves particularly beneficial in our double-max-style updates, as it aids in the accurate selection of next states for updates and counteracts the overestimation of Q-values associated with chosen actions.

We conducted an ablation study on negative reward shifting in the differential games with \(N=2\) and \(N=4\), depicted in Figure \ref{fig:neg}. The results indicated that negative reward shifting significantly enhanced the performance of our algorithm. Additionally, we visualized the evolution of the learned Q-values during the initial 16 updates of training in Figure \ref{fig:Q_NC}. With the same Q-value initialization, the central part of the state space, which represents infrequently achieved states, exhibited higher values than the edge parts under the negative reward shifting regimen. In contrast, when utilizing the standard positive reward setting, the central part displayed lower values compared to the edges initially, underscoring the effectiveness of this reward adjustment strategy in promoting a more balanced exploration and accurate value estimation.

\begin{figure*}[!ht]
         \centering
     \begin{subfigure}[b]{0.49\textwidth}
         \includegraphics[width=\textwidth]{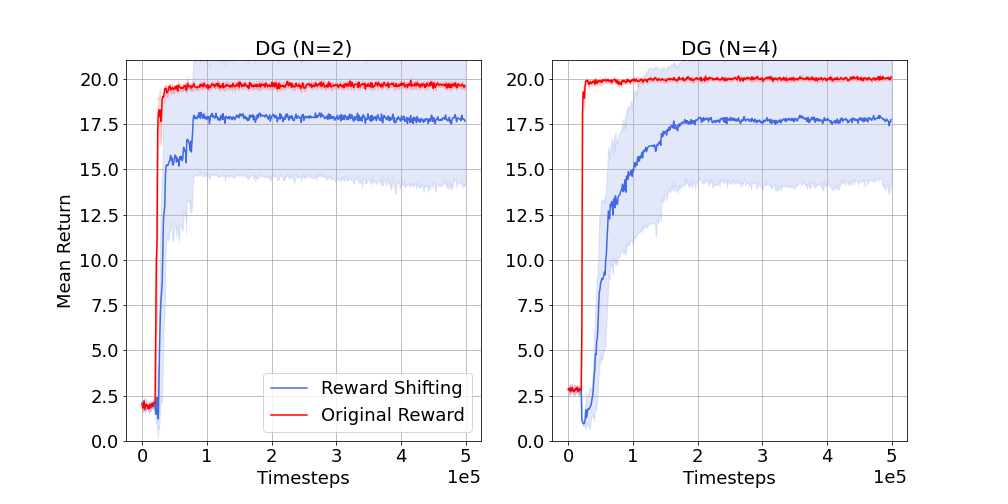}
         \caption{MMQ with Quantile model}
         \label{fig:neg_Q}
     \end{subfigure}
     \begin{subfigure}[b]{0.49\textwidth}
         \centering
         \includegraphics[width=\textwidth]{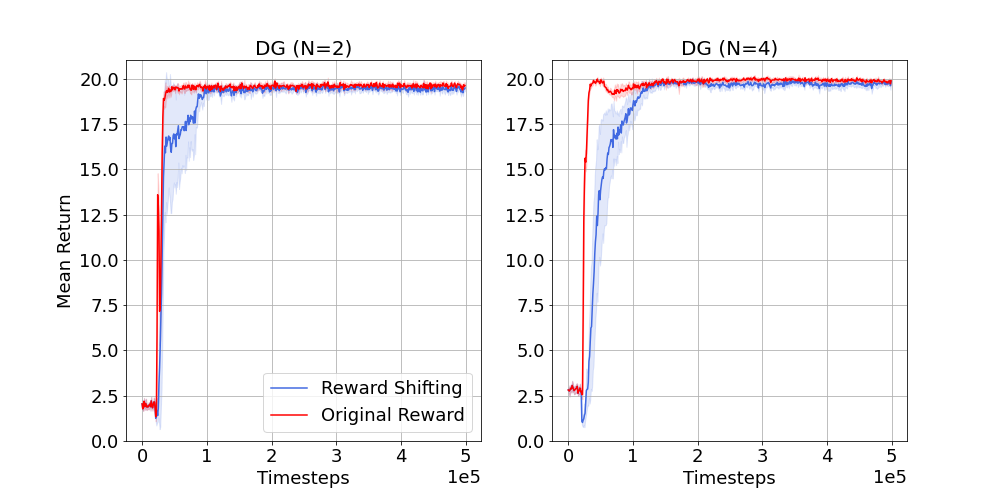}
         \caption{MMQ with Gaussian model}
         \label{fig:neg_G}
     \end{subfigure} 
     \caption{Performance comparison between reward shifting and original reward}
     \label{fig:neg}
\end{figure*}

\begin{figure}[!h]
  \centering
\includegraphics[width=0.8\linewidth]{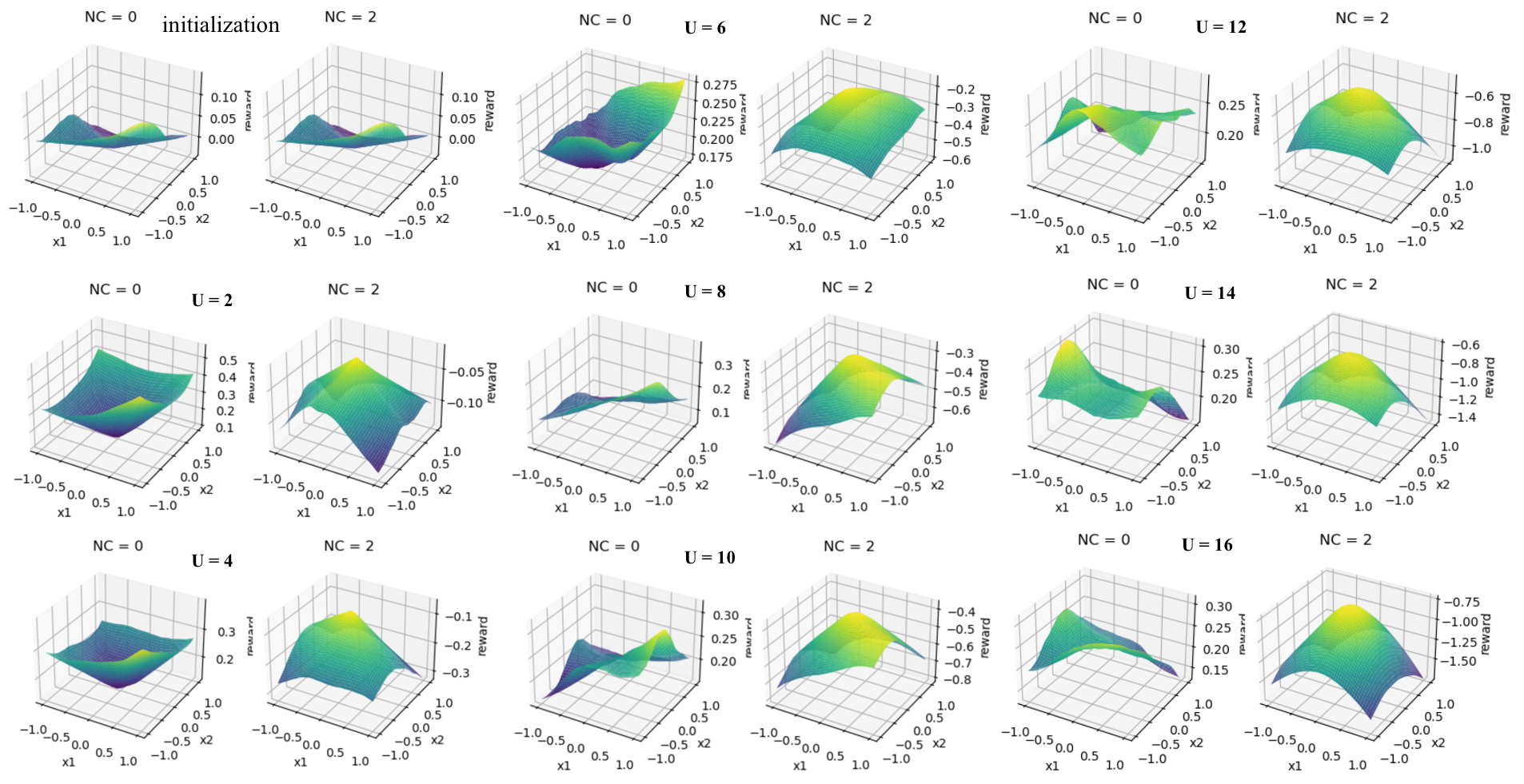}
  \caption{Comparison of Q-value for our algorithm with and without Negative reward shifting; NC=0, without negative reward shifting; NC=2, with negative reward shifting and the shifting constant = 2.}
  \label{fig:Q_NC}
\end{figure} 

\newpage
\subsection{Ensemble}
\label{app:variation}

In the MMQ algorithm, each agent uses two quantile models to capture variations in environment dynamics. While a single model on each side of the quantile bound provides valuable insights, its capacity to fully encapsulate the range of possible outcomes for state-action pairs can be limited. This limitation stems from the evolving nature of the agents' policies and the inherent uncertainties within the environment.

To enhance the robustness of our model and better address these challenges, we experimented with an ensemble approach. We utilized \( K \) pairs of models, denoted as \( \{g^{\tau_l}_{i,k}(s' | s, a_i; \phi^l_{k,i})\}_{k=1}^K \) and \( \{g^{\tau_u}_{i,k}(s' | s, a_i; \phi^u_{k,i})\}_{k=1}^K \), where each model in the ensemble is characterized by its own unique parameter set \( \phi_{k,i} \). This ensemble configuration allows for a broader representation of the dynamics, providing a more diverse set of potential outcomes. We define \( \mathcal{S}^*_{s, a_i} \) as the union of outcomes from all models: \( \mathcal{S}^*_{s, a_i} = \bigcup_{k=1}^K \{s'_{k, 1},\dots, s'_{k, M}\} \), where \( s'_{k, 1},\dots, s'_{k, M} \) are independent and identically distributed samples drawn from the bounds predicted by each pair of quantile models.

During the initial learning phases, these \( K \) models yield a broader range of predictions, as depicted in Fig.\ref{fig:ablaE_MeanDim}, reflecting different potential outcomes for the same state-action pairs. However, this approach did not lead to further improvement in performance, as a single quantile network was already sufficient to predict the bounds accurately. Consequently, this ensemble method did not enhance our algorithm's effectiveness in practice.

\begin{figure*}[!ht]
         \centering
     \begin{subfigure}[b]{0.65\textwidth}
         \includegraphics[width=\textwidth]{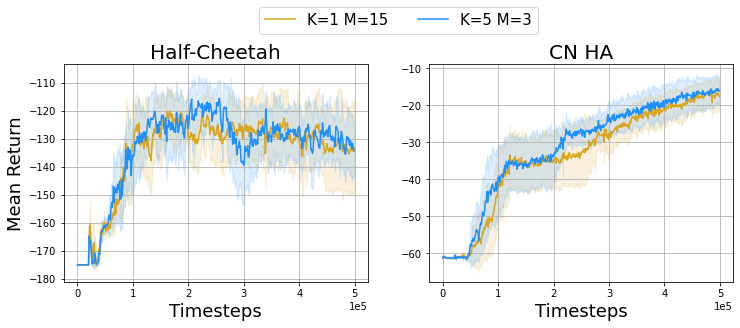}
         \caption{Ablation study for Ensemble}
         \label{fig:Abla_E}
     \end{subfigure}
     \hspace{0.02\textwidth}
     \begin{subfigure}[b]{0.3\textwidth}
         \centering
         \includegraphics[width=.8\textwidth]{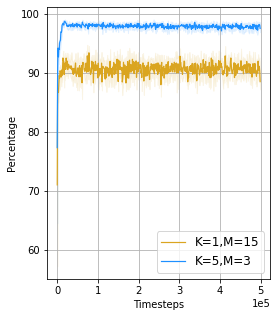}
         \caption{Percentage of true next states fall within the predicted quantile bound (each dim)}
         \label{fig:ablaE_MeanDim}
     \end{subfigure} 
\end{figure*}

\subsection{Comparison between quantile model and Gaussian model }\label{Quantile_Gau}
For the Gaussian model, the neural network parameters, are learned by minimizing a loss function based on the negative log-likelihood of the Gaussian distribution using experiences from its replay buffer $\mathcal{D}_i$:
\begin{align}
L(\phi_i) = -\mathbb{E}_{s,a_i\sim\mathcal{D}_i}\left[ \sum_{d=1}^D \log \left( \mathcal{N}(s'_d | \mu_{\phi_i, d}(s, a_i), \sigma_{\phi_i, d}^2(s, a_i)) \right)\right].
\label{eq:F_loss}
\end{align}
Here, \( \mu_{\phi_i, d}(s, a_i) \) and \( \sigma_{\phi_i, d}^2(s, a_i) \) are the neural network's predictions of the mean and variance for each dimension \(d\) of the next state \(s'\). The loss function sums the log-likelihoods across all dimensions \(D\) of the state space, focusing on accurately modeling the distribution of each state dimension. As shown in \ref{fig:pred_var}, the Gaussian model could also capture the other agents' influence on the observed state changes. The result in \ref{fig:com_G_Q} showed MMQ with the quantile model is slightly better than MMQ with the Gaussian model.

\begin{figure*}[!h]
         \centering
     \begin{subfigure}[b]{0.65\textwidth}
         \includegraphics[width=\textwidth]{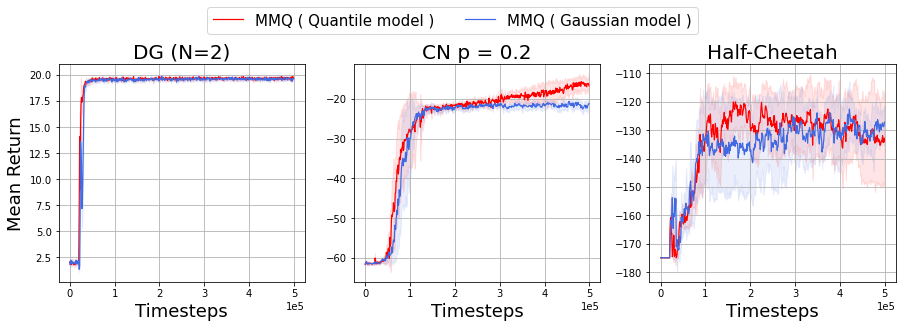}
         \caption{Comparison between Gaussian model and Quantile model}
         \label{fig:com_G_Q}
     \end{subfigure}
     \hspace{0.02\textwidth}
     \begin{subfigure}[b]{0.3\textwidth}
         \centering
         \includegraphics[width=.8\textwidth]{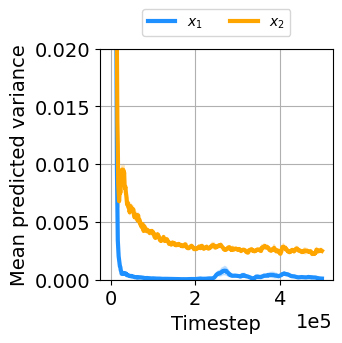}
         \caption{Predicted Variance of Agent 1's Forward Model in Differential Games DG (N=2)}
         \label{fig:pred_var}
     \end{subfigure} 
\end{figure*}

\section{Additional results}

\subsection{Learning curves in stochastic environment}
\label{app:stochastic}
To assess the robustness of our algorithm in the stochastic environment, we add Gaussian noise to the transition and the reward function. For stochastic state transition, we add a Gaussian noise to the position update of each agent: $x_i=\clip\{x_i + 0.1 \times a_i + z, -1, 1\}$, while $z\sim\mathcal{N}(0,\sigma^2)$. Here we test in two levels of stochasticity: $\sigma_s=0.02$ and $\sigma_s=0.05$. For the stochastic reward, we add Gaussian noise to the reward, $z\sim\mathcal{N}(0,\sigma^2)$, and the reward becomes $r = r + z$. We also test in two levels of noise: $\sigma_r=0.05$ and $\sigma_r=0.1$. In Figure \ref{fig:DG_noisy}, with noise in state transition, our algorithm still perform the best compared to other baselines. Although we use a deterministic function as reward function, our algorithm still find the global optimum for all seeds when there exits reward noise whereas the performance of other baselines degraded.

\begin{figure}[!ht]
  \centering
  \includegraphics[width=0.8\linewidth]{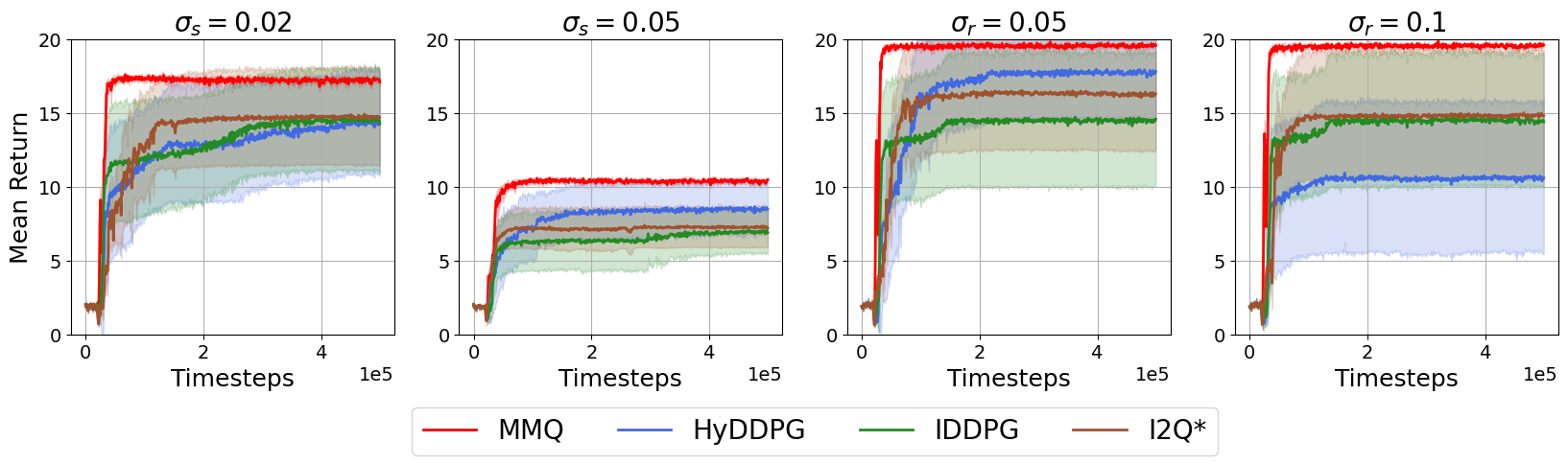}
  \caption{Learning curves in stochastic version of differential game (N=2)}
  \label{fig:DG_noisy}
\end{figure}

\subsection{Default Reward setting}
\label{app:dense}
We test our algorithm in cooperative navigation and Predator-Prey with the default reward setting \citep{lowe2017multi}. In the cooperative navigation, the result showed with a learned reward function, our algorithm learned a bit slower than two model-free baselines, HyDDPG and IDDPG, but finally achieved a similar level. When using the true reward from experiences (we then use $r(s')$ rather than $r(\hat{s}')$), our algorithm could achieve a similar performance as two baselines. Therefore, we assumed the reward estimation error in dense reward settings may impact our algorithm's performance slightly. In Predator-Prey with default sparse reward that agents would get $+10$ only when they collided with the prey, our algorithm outperforms all baselines.

\begin{figure}[!h]
  \centering
  \includegraphics[width=\linewidth]{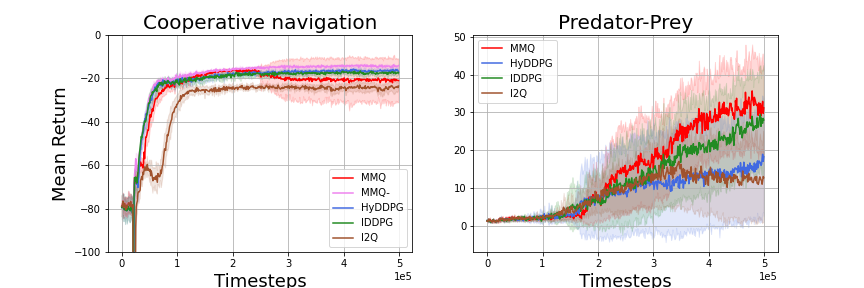}
  \caption{Cooperative navigation and Predator-Prey environment with default reward; MMQ$^-$ represent that we replace the learned reward function using the true reward from the experience; }
  \label{fig:oringal_Reward}
\end{figure}

\subsection{Additional Results for MAmujoco}
\label{app:more_Mamujoco}

We add results for three scenarios in Figure \ref{fig:HC42}: (1) Half-Cheetah $4|2$, there are two agents, one agent control $4$ joints and the other agent controls $2$ joints. (2) Ant $2\times4$, two agents, each agent control $4$ joints. (3) Ant $4\times2$, four agents and each agent control $2$ joints. MMQ slightly outperforms all other baselines. 

\begin{figure}[!h]
  \centering
  \includegraphics[width=0.8\linewidth]{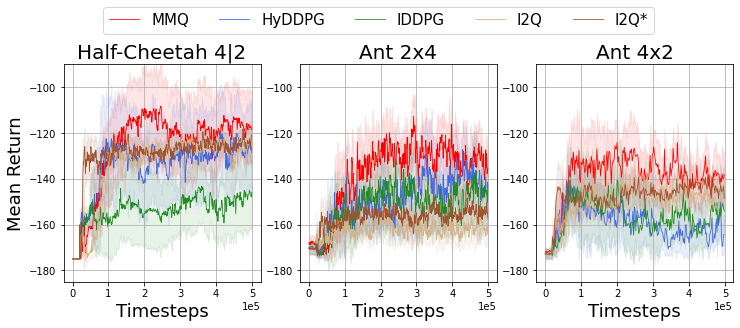}
  \caption{Learning curves on Multi-Agent Mujoco with RO reward setting}
  \label{fig:HC42}
\end{figure}

\newpage
\section{Hyperparameters}

\begin{table}[!ht]
\caption{Common Model and training hyperparameters used in all algorithms}
\label{tab:hyperparameters}
\centering
\scalebox{1}{%
\begin{tabular}{p{5cm}p{5cm}}
\hline
 & Parameter \\
\hline
Model architecture &FC layers[256,256] \\
Forward model architecture &FC layers[256,256] \\
Replay Buffer size &550000 \\
Batch size &100 \\
Optimizer &Adam \\
Learning rate &0.001 \\
Discount factor gamma &0.99 \\
Network initialization &Xavier \\
Activation & ReLU \\
Number of seeds & 8 \\
Total environment step &500000 \\
Exploration epsilon &0.1 \\
Episode length &25 Steps (expect 50 Steps for MPE sequential task) \\
Experience collection steps before training $T_{pretrain}$ &20000 \\
\hline
\end{tabular}}
\end{table}

The hyperparameters used in our experiments are listed in Table \ref{tab:hyperparameters}. All experiments were conducted on an NVIDIA TITAN V GPU. The training for each task typically completed within 5 hours.

\end{document}